\newtheorem{thm}[theorem]{Theorem}
\newtheorem{lemma}[theorem]{Lemma}
\newtheorem{proposition}[theorem]{Proposition}
\newenvironment{restatedthm}[1]{%
  \restatedthminner
}{\endrestatedthminner}
\begin{document}

\begin{frontmatter}

\title{Approximate Model-Based Shielding\\ for Safe Reinforcement Learning}

\author[A]{\fnms{Alexander W.}~\snm{Goodall}\thanks{Corresponding Author. Email: a.goodall22@imperial.ac.uk}}
\author[A]{\fnms{Francesco}~\snm{Belardinelli}}

\address[A]{Imperial College London}

\begin{abstract}
%\fb{In principle all of the following could be removed from the abstract, if we need space. The abstract is really to describe briefly the main contribution of the paper.  
% Model-based RL (MBRL) is a promising approach that, in addition to learning a reward maximising policy, learns an approximate dynamics model of the system. In theory MBRL improves the sample efficiency of RL agents by generating its own experience, and so it is able to learn from fewer environment interactions. This is important when experience is costly and bad actions can lead to catastrophic outcomes. 
%}
Reinforcement learning (RL) has shown great potential for solving complex tasks in a variety of domains. However, applying RL to safety-critical systems in the real-world is not easy as many algorithms are sample-inefficient and maximising the standard RL objective comes with no guarantees on worst-case performance. In this paper we propose approximate model-based shielding (AMBS), a principled look-ahead shielding algorithm for verifying the performance of learned RL policies w.r.t.~a set of given safety constraints. Our algorithm differs from other shielding approaches in that it does not require prior knowledge of the safety-relevant dynamics of the system. We provide a strong theoretical justification for AMBS and demonstrate superior performance to other safety-aware approaches on a set of Atari games with state-dependent safety-labels.
\end{abstract}

\end{frontmatter}

\section{Introduction}

Due to the inefficiencies of deep reinforcement learning (RL) and lack of guarantees, safe RL \cite{amodei2016concrete} has emerged as an increasingly active area of research. Ensuring the safety of RL agents is crucial for their widespread adoption in safety-critical applications where the cost of failure is high. From formal verification, \emph{Shielding} \cite{alshiekh2018safe} has been developed as an approach for safe RL, that comes with strong safety guarantees on the agents performance. However, classical shielding approaches make quite restrictive assumptions which limit their capabilities in real-world and high-dimensional tasks. To this end, we propose approximate model-based shielding (AMBS) to address these limitations and obtain a more general and widely applicable algorithm.

%\fb{are we reviewing approaches to safe RL? We should state this explicitly.}
The constrained Markov decision process (CMDP) \cite{altman1999constrained} is a popular way of framing the safe RL paradigm. In addition to maximising reward, the agent must satisfy a set of safety-constraints encoded as a cost function that penalises unsafe actions or states. In effect, this formulation constrains the set of feasible policies, which results in a tricky non-smooth optimisation problem. In high-dimensional settings, several model-free methods have been proposed based on trust-region methods \cite{achiam2017constrained, yangprojection, liu2020ipo} or Lagrangian relaxations of the constrained optimisation problem \cite{ray2019benchmarking, chow2019lyapunov}. Many of these methods rely on the assumption of convergence to obtain any guarantees.

Furthermore, model-based approaches to safe RL have gained increasing traction, in part due to recent significant developments in model-based RL (MBRL) \cite{ha2018world, hafner2020mastering, hafner2023mastering} and the superior sample complexity of model-based approaches \cite{hafner2019dream, janner2019trust}. In addition to learning a reward maximising policy, MBRL learns an approximate dynamics model of the system. Approximating the dynamics using Gaussian process (GP) regression \cite{williams2006gaussian} or ensembles of neural networks are both principled ways to quantify uncertainty, and develop risk- and safety-aware policy optimisation algorithms \cite{thomas2021safe, luo2021learning, as2022constrained} and model predictive control (MPC) schemes \cite{liu2020constrained}. 

%Another line of research much more closely related to formal verification, is logically constrained RL (LCRL) \cite{hasanbeig2018logically, de2020restraining}. This general principle is as follows, domain experts express reachability goals, safety constraints and other temporal properties as a temporal logic formula. Then an automatic reward shaping process translates the problem to RL, such that the learned policy provably satisfies the formula with maximum probability, under the assumption of convergence. It has been shown however, that agents following this framework still fail to reliably avoid safety-violationss during training and at convergence, without some sort of look-ahead procedure \cite{hasanbeig2020cautious}.

Shielding for RL was introduced in \cite{alshiekh2018safe} as a correct by construction reactive system that forces hard constraints on the learned policy, preventing it from entering 
%a set of 
unsafe states determined by a given temporal logic formula. Classical approaches to shielding \cite{alshiekh2018safe} require {\em a priori} access to a safety-relevant abstraction of the environment, where the dynamics are known or at least conservatively estimated. This can be quite a restrictive assumption in many real-world scenarios, where such an abstraction is not known or too complex to represent. However, these strong assumptions do come with the benefit of strong guarantees (i.e., \emph{correctness} and \emph{minimal-interference}). 

In this paper we operate in a less restrictive domain and only assume that there exists some expert labelling of the states; we claim that this is a more realistic paradigm which has been studied in previous work \cite{he2021androids, goodall2023approximate}. As a result, our approach is more broadly applicable to a variety of environments, including high-dimensional environments which have been rarely studied in the shielding literature \cite{odriozola2023shielded}. We do unfortunately loose the strict formal guarantees obtained by classical shielding, although we develop tight probabilistic bounds and a strong theoretical basis for our method.

Our approach, AMBS, is inspired by \emph{latent shielding} \cite{he2021androids} an approximate shielding algorithm that verifies policies in the latent space of a world model \cite{hafner2019dream, hafner2020mastering, hafner2023mastering} and is closely related to previous work on \emph{approximate shielding of Atari agents} \cite{goodall2023approximate}. The core idea of our approach is that we remove the requirement for a suitable safety-relevant abstraction of the environment by learning an approximate dynamics model of the environment. With the learned dynamics model we can simulate possible future trajectories under the learned policy and estimate the probability of committing a safety-violation in the near future. In contrast to previous approaches, we provide a stronger theoretical justification for utilising world models as a suitable dynamics model. And while we leverage DreamerV3 \cite{hafner2023mastering} as our stand-in dynamics model, we propose a more general purpose and model-agnostic framework for approximate shielding. 

\paragraph{Contributions.} Our main contributions are as follows: (1) we formalise the general AMBS framework which captures previous work on latent shielding \cite{he2021androids} and shielding Atari agents \cite{goodall2023approximate}. (2) We formalise notions such as \emph{bounded safety} \cite{giacobbe2021shielding} in Probabilistic Computation-tree Logic (PCTL), a principled specification language for reasoning about the temporal properties of discrete stochastic systems. (3) We provide PAC-style probabilistic bounds on the probability of accurately estimating a constraint violation with both the `true' dynamics model and a learned approximation. (4) We more rigorously develop the theory from \cite{goodall2023approximate} and theoretically justify the use of world models as the stand-in dynamics model. (5) We provide a much richer set of results on a small set of Atari games with state-dependent safety-labels and we demonstrate that AMBS significantly reduces the cumulative safety-violations during training compared to other safety-aware approaches. And in some cases AMBS also vastly improves the learned policy w.r.t.~episode return. All technical proofs are provided in full in the Appendix, along with extended results (learning curves) and implementation details, including hyperparameters and access to code. These materials are also provided online at \url{https://github.com/sacktock/AMBS}.

\section{Preliminaries}

In this section we introduce the relevant background material and notation required to understand our approach and the main theoretical results of the paper. We start by formalising the problem setup and the specification language used to define bounded safety. We then introduce prior look-ahead shielding approaches and world models.

\subsection{Problem Setup}
\label{sec:problem}
In our experiments we evaluate our approach on Atari games provided by the Arcade Learning Environment (ALE) \cite{bellemare13arcade}. Atari games are partially observable, as the agent is only given raw pixel data and does not have access to the underlying memory buffer of the emulator. This means one observation can map to many underlying states by a probability distribution. Therefore, we model the problem as a partially observable markov decision process (POMDP) \cite{puterman1990markov}.

To capture state-dependent safety-labels, we extend the POMDP tuple with a set of atomic propositions and a labelling function; a common formulation used in \cite{baier2008principles}. Formally, we define a POMDP as a 10-tuple $\mathcal{M} = (S, A, p, \iota_{init}, R, \gamma, \Omega, O, AP, L)$, where, $S$ is a finite set of {\em states}, $A$ is a finite set of {\em actions}, $p : S \times A \times S \rightarrow [0, 1]$ is the {\em transition function}, where $p(s'\mid s,a)$ is the probability of transitioning to state $s'$ by taking action $a$ in state $s$, $\iota_{init} : S \rightarrow [0, 1]$ is the {\em initial state distribution}, where $\iota_{init}(s)$ is the probability of starting in state $s$, $R : S \times A \rightarrow \mathbb{R}$ is the {\em reward function}, where $R(s,a)$ is the immediate reward received for taking action $a$ in state $s$, $\gamma \in (0, 1]$ is the discount factor, $\Omega$ is a finite set of {\em observations}, $O : S \times \Omega \rightarrow [0, 1]$ is the {\em observation function}, where $O(o\mid s)$ is the probability of observing $o$ in state $s$, $AP$ is a finite set of {\em atomic propositions} (or atoms),  $L : S \rightarrow 2^{AP}$ is the {\em labeling function}, where $L(s)$ is the set of atoms that hold in state $s$.

\textit{We note here that MDPs are a special case of POMDPs without partial observability. For example, we can suppose that in MDPs, $\Omega = S$ and the observation function $O : S \times \Omega \rightarrow [0, 1]$ collapses to the identity relation.}

In addition, we are given a propositional safety-formula $\Psi$ that encodes the safety-constraints of the environment. For example, a simple $\Psi$ might look like,
\begin{equation*}
    \Psi = \neg \textbf{collision} \land (\textbf{red-light} \Rightarrow \textbf{stop})
\end{equation*}
which says ``don't have a collision and stop when there is a red light'', for $AP = \{ \textbf{collision}, \textbf{red-light}, \textbf{stop}\}$. 

At each timestep $t$, the agent receives an observation $o_t$, reward $r_t$ and a set $L(s_t)$ of labels. The agent can then determine that $s_t$ satisfies the safety-formula $\Psi$ by applying the following relation,
\begin{equation*}
    \begin{array}{@{}r@{{}\mathrel{}}c@{\mathrel{}{}}l@{}}
%        s \models \textrm{true} & \text{for all} & s \in S\\
    s \models a & \text{iff} & a \in L(s)\\
    s \models \neg \Psi & \text{iff} & s \not \models \Psi\\
    s \models \Psi_1 \land \Psi_2 & \text{iff} & s \models \Psi_1 \text{ and } s \models \Psi_2
    \end{array}
\end{equation*}
where $a \in AP$ is an atomic proposition, negation ($\neg$) and conjunction ($\land$) are the familiar logical operators from propositional logic. The goal is to find a policy $\pi$ that maximises reward, that is $\pi^* = \arg\max_{\pi} \mathbb{E}[\sum_{t=0}^{\infty} \gamma^{t} R(s_t, \pi(s_t))]$, while minimising the cumulative number of violations of the safety-formula $\Psi$ during training and deployment.

\subsection{Probabilistic Computation-tree Logic}

Probabilistic Computation Tree Logic (PCTL) is a branching-time temporal logic based on CTL that allows for probabilistic quantification along path formula \cite{baier2008principles}. PCTL is a convenient way of stating soft reachability and safety properties of discrete stochastic systems which makes it a commonly used specification language for probabilistic model checkers. A well-formed PCTL formula can be constructed with the following grammar,
\begin{align*}
    \Phi ::= & \mid a \mid \neg \Phi \mid \Phi \land \Phi \mid \mathbb{P}_{J}(\phi) \\
    \phi ::= & X \Phi \mid \Phi U \Phi \mid \Phi U^{\leq n} \Phi
\end{align*}
where $J \subset [0, 1]$, $J \neq \varnothing$ is a non-empty subset of the unit interval, and next $X$, until $U$, and bounded until $U^{\leq n}$ are the temporal operators from CTL \cite{baier2008principles}. We make the distinction here between state formula $\Phi$ and path formula $\phi$ which are interpreted over states and paths respectively. 

For state formula $\Phi$, we write $s \models \Phi$ to indicate that the state $s \in S$ satisfies the state formula $\Phi$, where the satisfaction relation is defined in a similar way as before (Section \ref{sec:problem}), but also includes probabilistic quantification, see \cite{baier2008principles} for details.  On the other hand, path formula $\phi$ are interpreted over sequences of states or \emph{traces}. In the following section we provide the satisfaction relation for path formula $\phi$ for the fragment of PCTL that we use. We also note that the common temporal operators eventually $\lozenge$ and always $\square$, and their bounded counterparts $\lozenge^{\leq n}$ and  $\square^{\leq n}$ can be derived from the grammar above in a straightforward way, see \cite{baier2008principles} for details.

\subsection{Bounded Safety}

Now we are equipped to formalise the notion of \emph{bounded safety} introduced in \cite{giacobbe2021shielding}. Consider a fixed (stochastic) policy $\pi : O \times A \to [0, 1]$ and a POMDP $\mathcal{M} = (S, A, p, \iota_{init}, R, \gamma, \Omega, O, AP, L)$. Together $\pi$ and $\mathcal{M}$ define a transition system $\mathcal{T} : S \times S \to [0, 1]$ on the set of states $S$, where $\sum_{s' \in S} \mathcal{T}(s'\mid s) = 1$. A {\em finite trace} of the transition system $\mathcal{T}$ with length $n$ ($n$ transitions) is a sequence of states $s_0 \to s_1 \to  \ldots \to s_n$ denoted by $\tau$, where each state transition is induced by the transition probabilities of $\mathcal{T}$. Let $\tau[i]$ denote the $i^\textrm{th}$ state of $\tau$. A trace $\tau$ is said to satisfy bounded safety if all of its states satisfy the given propositional safety-formula $\Psi$, that is,
\begin{eqnarray*}
    \tau \models \square^{\leq n} \Psi & \text{iff} & \forall i \; 0 \leq i \leq n, \tau[i] \models \Psi
\end{eqnarray*}
for some bounded look-ahead parameter $n$. In words, the path formula $\phi = \square^{\leq n} \Psi$ says ``always satisfy $\Psi$ in the next $n$ steps''. Now in PCTL we say that a given state $s \in S$ satisfies $\Delta$-bounded safety or formally $s \models \mathbb{P}_{\geq 1-\Delta}(\square^{\leq n} \Psi )$ iff,
\begin{multline}
   \mu_s(\{\tau \mid \tau[0] = s, \forall i \; 0 \leq i \leq n, \tau[i] \models \Psi\}) \in [1- \Delta, 1] \label{eq:eboundedsafety}
\end{multline}
where $\mu_s$ is a well-defined probability measure (induced by the transition system $\mathcal{T}$) on the set of traces starting from $s$ and with finite length $n$, see \cite{baier2008principles} for additional details on the semantics of PCTL. 

Throughout the rest of the paper we will denote $\mu_{s\models \phi}$ as shorthand for the measure $ \mu_s(\{\tau \mid \tau[0] = s, \forall i \;0 \leq i \leq n, \tau[i] \models \Psi\})$, where $\phi = \square^{\leq n} \Psi$ is the path formula that corresponds to bounded safety. By grounding bounded safety in PCTL we obtain a principled way to trade-off safety and exploration with the $\Delta$ parameter. Since strictly enforcing bounded safety can lead to overly conservative behaviour during training, which is not always desirable if we are to make progress toward the optimal policy.

\subsection{Look-ahead Shielding}

The general principle of look-ahead shielding is as follows, from the current state we compute the set of possible future paths and check if they incur any unsafe states \cite{giacobbe2021shielding}. If the agent proposes an action that leads the agent down a path with an unavoidable unsafe state, then the look-ahead shielding procedure should override the agent's action with an action leading down a safe path instead, see Fig.~\ref{fig:fig1}.

\begin{figure}[h!]
	\centerline{\includegraphics[width=.25\textwidth]{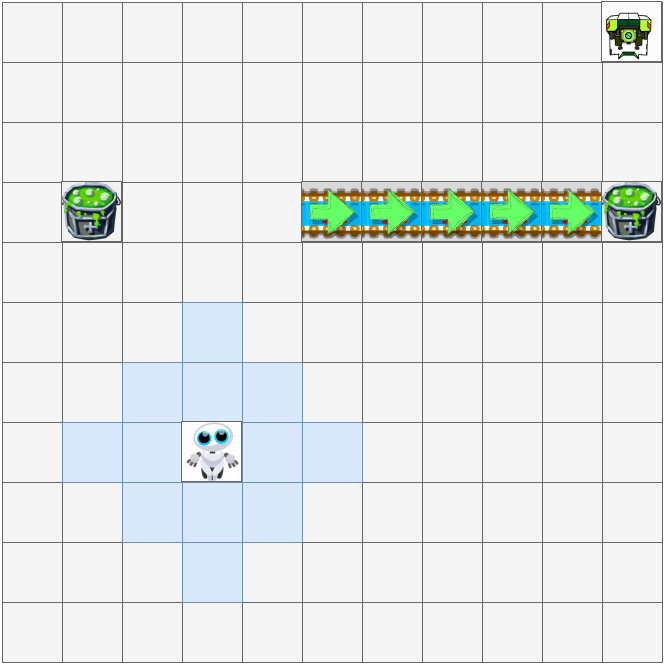}}
	\caption{With a Manhattan distance look-ahead of 2 (blue squares) the robot can avoid the \textit{left-most} vat of acid, but is unable to determine that the conveyor belt leads to an unavoidable unsafe state. Without a further look-ahead horizon, the robot is doomed to fall in the \textit{right-most} vat of acid during exploration. \emph{This image was created with the assistance of DALL·E 2.}}
	\label{fig:fig1}
\end{figure}

Bounded Prescience Shielding (BPS) \cite{giacobbe2021shielding} is an approach to shielding Atari agents that assumes access to a black-box simulator of the environment, which can be queried for look-ahead shielding. In the worst case BPS must enumerate all paths of length $H$ from the current state, to find a safe path for the agent to follow.  This of course comes with a substantial computational overhead, even with a relatively small horizon of $H=5$.

Our approach most closely resembles \emph{latent shielding}, \cite{he2021androids} which rolls out learned policies in the latent space of a \emph{world model} \cite{ha2018world, hafner2019dream} that approximates the `true' dynamics of the environment for look-ahead shielding. By simulating futures in a low-level latent space rather than with a high-fidelity simulator, latent shielding can be used with a much larger look-ahead horizon ($H=15$) and can be deployed during training. As a result we can hope to detect unavoidable unsafe states that require a larger look-ahead horizon to avoid. 

Unfortunately \emph{latent shielding} \cite{he2021androids} is not the be all and end all and is still fairly short-sighted in comparison to our approach. Rolling out dynamics models for long horizons is impractical as we will quickly run into accumulating errors that arise from the approximation error of the learned model. Instead we use \emph{safety critics} for bootstrapping the end of simulated trajectories, to obtain further look-ahead capabilities without running into accumulating errors. 

%\fb{where do we present this in the paper, we can mention this.} \ag{safety critics aren't mentioned before this point, should we mention them earlier in the paper and the reasons for using them, or is the subsection on safety critics in section 3 enough?}

\subsection{World Models}

%\emph{World models} were first introduced in the paper \cite{ha2018world} of the same name, although their inspiration is much more deeply rooted in psychology \cite{tolman1948cognitive} and Bayesian theories of the brain \cite{friston2003learning, friston2021world}. World models are typically composed of two main components: a variational autoencoder (VAE) \cite{kingma2013auto} designed to provide high quality image gradients to learn a compact latent representation, and a recurrent state space model (RSSM) \cite{hafner2019learning} which is trained to predict the latent encoding of the next observation given the current latent and action. 
\emph{World models} were first introduced in the titular paper \cite{ha2018world}. Built on the recurrent state space model (RSSM) \cite{hafner2019learning}, world models are trained to learn a compact latent representation of the environment state and dynamics. Once the dynamics are learnt the world model can be used to `imagine' possible future trajectories.

We leverage DreamerV3 \cite{hafner2023mastering} as our stand-in dynamics model for look-ahead shielding and policy optimisation. DreamerV3 consists of the following components: (1) the image encoder $z_t \sim q_{\theta}(z_t \mid o_t, h_t)$ that learns a posterior latent representation given the observation $o_t$ and recurrent state $h_t$. (2) The recurrent model $h_t = f_{\theta}(h_{t-1}, z_{t-1},a_{t-1})$, which computes the next deterministic latents given the past state $ s_{t-1} = (h_{t-1}, z_{t-1})$ and action $a_{t-1}$. (3) The transition predictor $\hat z_t \sim p_{\theta}(\hat z_t \mid h_t)$, which is used as the prior distribution over the stochastic latents (without $o_t$). (4) The image decoder $\hat o_t \sim p_{\theta}(\hat o_t \mid h_t, z_t)$ which is used to provide high quality image gradients by minimising reconstruction loss. (5) The prediction heads $\hat r_t \sim p_{\theta}(\hat r_t \mid h_t,z_t )$ and $\hat \gamma_t \sim p_{\theta}(\hat \gamma_t \mid h_t, z_t)$ trained to predict reward signals and episode termination respectively.

In DreamerV3 the latents represent categorical variables and the reward and image prediction heads both parameterise a \emph{twohot symlog} distributions \cite{hafner2023mastering}. All components are implemented as neural networks and optimised with straight through gradients. In addition, KL-balancing \cite{hafner2020mastering} and free-bits \cite{hafner2019dream} are used to regularise the prior and posterior representations and prevent a degenerate latent space representation.

Policy optimisation is performed entirely on experience `imagined' in the latent space of the world model. We refer to $\pi^{\text{task}}$ as the task policy trained in the world model to maximise expected discounted accumulated reward, that is, optimise the following,
\begin{equation}
    \max \mathbb{E}_{\pi^{\text{task}}, p_{\theta}}\left[\sum_{t=1}^{\infty} \hat{\gamma}^{t-1} \hat r_t \right]
    \label{eq:maxrew}
\end{equation}
In addition to the task policy $\pi^{\text{task}}$, a task critic denoted $v^{\text{task}}$, is used to construct TD-$\lambda$ targets \cite{sutton2018reinforcement} which are used to help guide the task policy $\pi^{\text{task}}$ in a TD-$\lambda$ actor-critic style algorithm. We refer the reader to \cite{hafner2023mastering} for more precise details.

\section{Approximate Model-Based Shielding}
\label{sec:ambs}
In this section we present the AMBS framework, our novel method for look-ahead shielding RL policies with a learned dynamics model. We start by giving an overview of the approach, followed by a strong theoretical basis for AMBS. Specifically, we derive PAC-style probabilistic bounds on estimating the probability of a constraint violation under both the `true' transition system and a learned approximation. We show these hold under reasonable assumptions and we demonstrate when these assumptions hold in specific settings. 

\subsection{Overview}

AMBS is split into two main phases: the \emph{learning phase}, which includes (i) dynamics learning, (ii) task policy optimisation with the standard RL objective, 
and (iii) safe policy learning and safety critic optimisation, and the \emph{environment interaction phase}, which consists of collecting experience from the environment with the shielded policy. The new experience is then used to improve the dynamics model. 

To obtain a shielded policy, recall that we are interested in verifying PCTL formulas of the form $\Phi = \mathbb{P}_{\geq 1 -  \Delta}(\square^{\leq n} \Psi )$ ($\Delta$-bounded safety) on the transition system $\mathcal{T}: S \times S \to [0, 1]$ induced by fixing the task policy $\pi^{\textrm{task}}$ for a given POMDP $\mathcal{M}$, where $\Psi$ is the propositional safety-formula. Quite simply, the shielded policy picks actions according to $\pi^{\textrm{task}}$ provided $\Delta$-bounded safety is satisfied, otherwise the shielded policy picks actions with a backup policy that should prevent the agent from committing the possible safety-violation.

To check $\Delta$-bounded safety, we can simulate the transition system $\mathcal{T}$ by rolling-out the learned world model $p_{\theta}$ with actions sampled from the task policy $\pi^{\textrm{task}}$; effectively obtaining an approximate transition system $\widehat{\mathcal{T}} : S \times S \to [0, 1]$. Even with access to the `true' transition system $\mathcal{T}$, exact PCTL model checking is $O(\textrm{poly}(size(\mathcal{T}))\cdot n \cdot |\Phi|)$ which is too big for high-dimensional settings. Rather, we rely on Monte-Carlo sampling from the approximate transition system $\widehat{\mathcal{T}}$ to obtain an estimate $\tilde \mu_{s \models \phi}$ of the measure $\mu_{s\models \phi}$, which specifies the probability of satisfying $\phi =\square^{\leq n} \Psi$ (bounded safety) from $s$, under the `true' transition system $\mathcal{T}$.

\subsection{Fully Observable Setting}

We start by considering the fully observable setting, that is we are concerned with checking PCTL formula of the form $\Phi = \mathbb{P}_{\geq 1 -  \Delta}(\square^{\leq n} \Psi )$ ($\Delta$-bounded safety) on the transition system $\mathcal{T}: S \times S \to [0, 1]$ induced by fixing some given policy $\pi$ in an MDP $\mathcal{M} = (S, A, p, \iota_{init}, R, \gamma, AP, L)$. We state the following result,
\begin{thm}
Let $\epsilon > 0$, $\delta > 0$, $s \in S$ be given. With access to the `true' transition system $\mathcal{T}$, with probability $1 - \delta$ we can obtain an $\epsilon$-approximate estimate of the measure $\mu_{s\models\phi}$, by sampling $m$ traces $\tau \sim \mathcal{T}$, provided that,
\begin{equation}
    m \geq \frac{1}{2\epsilon^2} \log\left(\frac{2}{\delta}\right) \label{eq:boundonm}
\end{equation}
\label{prop:boundonm}
\end{thm}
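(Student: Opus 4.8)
The plan is to recognise this as a direct application of Hoeffding's concentration inequality for bounded i.i.d.\ random variables. First I would set up the Monte-Carlo estimator explicitly: sample $m$ traces $\tau_1, \ldots, \tau_m$ independently from $\mathcal{T}$ with $\tau_i[0] = s$, and for each define the indicator $X_i = \mathbbm{1}[\tau_i \models \phi]$ recording whether that trace satisfies bounded safety $\phi = \square^{\leq n} \Psi$. The empirical estimate is then the sample mean $\tilde\mu_{s\models\phi} = \frac{1}{m}\sum_{i=1}^m X_i$.

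Next I would pin down the distributional fact that makes this estimator correct. Because the traces are drawn according to the measure $\mu_s$ induced by the transition system, each $X_i$ is a Bernoulli variable with $\mathbb{E}[X_i] = \mu_s(\{\tau \mid \tau[0]=s, \tau \models \phi\}) = \mu_{s\models\phi}$, which is exactly the quantity we wish to estimate. Independence of the draws makes the $X_i$ i.i.d., and each is supported on $[0,1]$, so $\tilde\mu_{s\models\phi}$ is an unbiased estimator of $\mu_{s\models\phi}$ and we sit squarely in the hypotheses of Hoeffding's inequality.

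I would then invoke Hoeffding in its two-sided form, which for i.i.d.\ variables bounded in $[0,1]$ yields $\mathbb{P}(|\tilde\mu_{s\models\phi} - \mu_{s\models\phi}| \geq \epsilon) \leq 2\exp(-2m\epsilon^2)$. Demanding that this failure probability be at most $\delta$ gives $2\exp(-2m\epsilon^2) \leq \delta$, and solving for $m$ produces $m \geq \frac{1}{2\epsilon^2}\log(2/\delta)$, which is precisely \eqref{eq:boundonm}. Passing to the complementary event then delivers the $\epsilon$-approximation with probability at least $1-\delta$.

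The main obstacle here is conceptual rather than computational: one must justify that checking bounded safety on a sampled trace genuinely yields a Bernoulli draw whose mean is the PCTL measure $\mu_{s\models\phi}$. This relies on the look-ahead horizon $n$ being finite, so that each $X_i$ is determined by a finite trace and is thus well defined, and on $\mu_s$ being the well-defined probability measure on length-$n$ traces induced by $\mathcal{T}$, as recorded just after \eqref{eq:eboundedsafety}. Once that identification is secured, the factor of $2$ in the bound is merely the cost of the two-sided deviation and the remainder is routine algebra; notably, the bound is distribution-free, depending on neither $|S|$ nor $n$.
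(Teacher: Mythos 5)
Your proof is correct and follows essentially the same route as the paper's: define indicator variables for trace satisfaction of $\square^{\leq n}\Psi$, form the sample mean as an unbiased estimator of $\mu_{s\models\phi}$, apply the two-sided Hoeffding bound, and solve $2\exp(-2m\epsilon^2)\leq\delta$ for $m$. Your additional remark about checkability of bounded safety on finite traces mirrors the paper's own closing comment after its proof.
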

\begin{proof}
The proof is an application of Hoeffding's inequality \cite{hoeffding1994probability}. See Appendix \ref{sec:proofs} for details.
\end{proof}

Suppose that we only have access to an approximate MDP, where the transition function $p$ is estimated with a learned approximation $\hat p$, that is, $\widehat{\mathcal{M}} = (S, A, \hat p, \iota_{init}, R, \gamma, AP, L)$. As earlier, $\pi$ and $\widehat{\mathcal{M}}$ define an approximate transition system $\widehat{\mathcal{T}}$. We show that we can obtain an estimate $\tilde \mu_{s\models\phi}$ for $\mu_{s\models\phi}$ by sampling traces from $\widehat{\mathcal{T}}$.
\begin{thm}
Let $\epsilon > 0$, $\delta> 0$ be given. Suppose that for all $s \in S$, the total variation (TV) distance between $\mathcal{T}(s' \mid s)$\footnote{Unless $s'$ is fixed we define $\mathcal{T}(s' \mid s)$ as the distribution of next states $s'$ given $s$ rather than as an explicit probability.} and $\widehat{\mathcal{T}}(s' \mid s)$ is bounded by some $\alpha \leq \epsilon/n$. That is,
\begin{equation}
	D_{\text{TV}}\left(\mathcal{T}(s' \mid s), \widehat{\mathcal{T}}(s' \mid s) \right) \leq \alpha \; \forall s \in S 
 \label{eq:tvdistance}
\end{equation}
Now fix an $s \in S$, with probability $1 - \delta$ we can obtain an $\epsilon$-approximate estimate of the measure $\mu_{s\models\phi}$, by sampling $m$ traces $\tau \sim \widehat{\mathcal{T}}$, provided that,
\begin{equation}
    m \geq \frac{2}{\epsilon^2} \log\left(\frac{2}{\delta}\right)
    \label{eq:boundonmbig}
\end{equation}
\label{prop:tv}
\end{thm}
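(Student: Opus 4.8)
The plan is to split the total error by a triangle inequality into a \emph{sampling} error and a \emph{model} error, and control each separately. Write $\hat\mu_{s\models\phi}$ for the true probability that a trace satisfies $\phi = \square^{\leq n}\Psi$ under the approximate system $\widehat{\mathcal{T}}$, and $\tilde\mu_{s\models\phi}$ for its Monte-Carlo estimate from $m$ traces drawn from $\widehat{\mathcal{T}}$. Then
\begin{equation*}
|\tilde\mu_{s\models\phi} - \mu_{s\models\phi}| \leq \underbrace{|\tilde\mu_{s\models\phi} - \hat\mu_{s\models\phi}|}_{\text{sampling error}} + \underbrace{|\hat\mu_{s\models\phi} - \mu_{s\models\phi}|}_{\text{model error}},
\end{equation*}
so it suffices to bound each term by (roughly) $\epsilon/2$.

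First I would handle the sampling term exactly as in Theorem~\ref{prop:boundonm}. Each sampled trace $\tau \sim \widehat{\mathcal{T}}$ either satisfies $\phi$ or not, so $\mathbbm{1}[\tau \models \phi]$ is a $[0,1]$-valued random variable with mean $\hat\mu_{s\models\phi}$, and $\tilde\mu_{s\models\phi}$ is the empirical average of $m$ such i.i.d.\ indicators. Hoeffding's inequality then gives $\Pr(|\tilde\mu_{s\models\phi} - \hat\mu_{s\models\phi}| \geq \epsilon/2) \leq 2\exp(-m\epsilon^2/2)$; forcing the right-hand side to be at most $\delta$ yields precisely the bound $m \geq \tfrac{2}{\epsilon^2}\log(2/\delta)$ of Eq.~\eqref{eq:boundonmbig}. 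The factor-four increase in $m$ relative to Theorem~\ref{prop:boundonm} is exactly the cost of spending only half the error budget, $\epsilon/2$, on the sampling term.

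Second, I would bound the model term using the per-step assumption \eqref{eq:tvdistance}. The crucial step is to lift the one-step bound $D_{\text{TV}}(\mathcal{T}(\cdot\mid s),\widehat{\mathcal{T}}(\cdot\mid s)) \leq \alpha$ to a bound on the TV distance between the measures $\mu_s$ and $\hat\mu_s$ over length-$n$ traces starting at the fixed state $s$. I would do this with a coupling (equivalently, a telescoping/hybrid) argument: take a maximal coupling of $\mathcal{T}$ and $\widehat{\mathcal{T}}$ at each transition, so the two chains disagree at any single step with probability at most $\alpha$; since both traces start at the same $s$, a union bound over the $n$ transitions shows they coincide with probability at least $1-n\alpha$, whence $D_{\text{TV}}(\mu_s,\hat\mu_s) \leq n\alpha$. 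Because $\{\tau \models \phi\}$ is a fixed measurable subset of the common trace space on which both measures act, $|\hat\mu_{s\models\phi} - \mu_{s\models\phi}| \leq D_{\text{TV}}(\mu_s,\hat\mu_s) \leq n\alpha \leq \epsilon$ by the hypothesis $\alpha \leq \epsilon/n$. Combining the two terms closes the argument with probability at least $1-\delta$.

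The main obstacle is this second step: the \emph{tensorization} of the per-step TV bound into a trace-level bound, which is where the Markov structure of $\mathcal{T}$ and $\widehat{\mathcal{T}}$ is genuinely used and where one must be careful that the shared initial state $s$ contributes no error so that only the $n$ transitions accumulate. It is also where the constants must be tracked carefully, since the coupling delivers a model error of order $n\alpha$ and the assumption $\alpha \leq \epsilon/n$ must be made to dovetail with the $\epsilon/2$ sampling budget; everything else — the indicator reformulation and the Hoeffding application — is routine once the event $\tau \models \phi$ is recognised as a single measurable set evaluated under both $\mu_s$ and $\hat\mu_s$.
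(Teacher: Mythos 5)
Your decomposition is the same as the paper's: split $|\tilde\mu_{s\models\phi}-\mu_{s\models\phi}|$ into a sampling term and a model-bias term, handle the former with Hoeffding at accuracy $\epsilon/2$ (which yields exactly Eq.~\eqref{eq:boundonmbig}), and control the latter by accumulating the per-step TV bound over the $n$ transitions. Where you differ is in how the one-step bound is lifted to a whole trace: you couple $\mathcal{T}$ and $\widehat{\mathcal{T}}$ step by step and bound the TV distance between the two trace measures by $n\alpha$, whereas the paper goes through its error-amplification lemma (Lemma~\ref{lemma:marginal}) on the marginal state distributions $\mathcal{T}^t,\widehat{\mathcal{T}}^t$ and then compares $\mu_{s\models\phi}$ and $\hat\mu_{s\models\phi}$ via the TV distance of the \emph{average} state distributions $\rho_{\mathcal{T}},\rho_{\widehat{\mathcal{T}}}$, in the style of the simulation lemma. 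Your route is arguably the more natural one for this statement: $\{\tau\models\square^{\leq n}\Psi\}$ is a trace-level event rather than a linear functional of the per-step occupancy measure, so bounding its probability gap by $D_{TV}$ of the trace measures (as you do) is more direct than bounding it by $D_{TV}(\rho_{\mathcal{T}},\rho_{\widehat{\mathcal{T}}})$ (as the paper does).

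The one concrete problem is that your error budget does not close. You announce that each term should be at most $\epsilon/2$, but the coupling delivers $|\hat\mu_{s\models\phi}-\mu_{s\models\phi}|\leq D_{TV}(\mu_s,\hat\mu_s)\leq n\alpha\leq\epsilon$ under the hypothesis $\alpha\leq\epsilon/n$ --- a full $\epsilon$, not $\epsilon/2$ --- so combined with the $\epsilon/2$ sampling error you only obtain a $3\epsilon/2$-approximation. This model bias is deterministic and cannot be reduced by taking more samples. The paper's accounting closes because its simulation-lemma computation arrives at $|\mu_{s\models\phi}-\hat\mu_{s\models\phi}|\leq\alpha n/2\leq\epsilon/2$, the factor $\tfrac{1}{2}$ coming from its normalisation of the TV distance. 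To repair your version you would need to strengthen the hypothesis to $\alpha\leq\epsilon/(2n)$ (or accept the weaker $3\epsilon/2$ conclusion); the factor $\tfrac{1}{2}$ does not survive your coupling in general, since $D_{TV}(\mu_s,\hat\mu_s)\leq n\alpha$ is tight. Everything else --- the i.i.d.\ indicator formulation, the Hoeffding application recovering the stated $m$, and the observation that the shared initial state contributes no error --- is correct and matches the paper.
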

\begin{proof}
The proof follows from Theorem \ref{prop:boundonm} and the \emph{simulation lemma} \cite{kearns2002near} adapted to our purposes, see Appendix \ref{sec:proofs}.
\end{proof}

The assumption we make in Theorem \ref{prop:tv} on the $TV$ distance being bounded for all states $s \in S$ (Eq.~\ref{eq:tvdistance}) is quite strong. In reality we may only require that the $TV$ distance between $\mathcal{T}(s' \mid s)$ and $\widehat{\mathcal{T}}(s' \mid s)$ is bounded for the safety-relevant subset of the state space. 
\paragraph{Tabular Case.} It may also be interesting to consider under what conditions the $TV$ distance between $\mathcal{T}(s' \mid s)$ and $\widehat{\mathcal{T}}(s' \mid s)$ are bounded for a given state. In the tabular case we can obtain an approximate dynamics model $\hat p(s' \mid s, a)$ for the `true' dynamics $p(s' \mid s, a)$ by using the maximum likelihood principle for categorical variables. Quite simply let,
\begin{equation}
    \hat p(s' \mid s, a) = \frac{c(s', s, a)}{v(s, a)}
\end{equation}
where $c(s', s, a)$ is the visit count of $s'$, $s$, $a$ or equivalently the number of times $(s, a)$ has lead $s'$ and $v(s, a)$ is the visit count of $(s, a)$ or similarly the number of times $a$ has been picked from $s$.
\begin{thm}
Let $\alpha > 0$, $\delta > 0$, $s \in S$ be given. With probability $1 - \delta$ the total variation (TV) distance between $\mathcal{T}(s' \mid s)$ and $\widehat{\mathcal{T}}(s' \mid s)$ is upper bounded by $\alpha$, provided that all actions $a \in A$ with non-negligable probability $\eta \geq \alpha/(|A||S|)$ (under $\pi$) have been picked from $s$ at least $m$ times, where
\begin{equation}
	m \geq \frac{|S|^2}{\alpha^2}\log\left( \frac{2 |A||S|}{\delta}\right)
\end{equation}
\label{prop:tabular}
\end{thm}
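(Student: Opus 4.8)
The plan is to reduce the total-variation distance between the two induced transition systems to a weighted sum of per-action estimation errors, and then control each of these with a standard concentration argument. First I would write the induced kernels explicitly as mixtures over actions, $\mathcal{T}(s' \mid s) = \sum_{a \in A} \pi(a \mid s)\, p(s' \mid s,a)$ and $\widehat{\mathcal{T}}(s' \mid s) = \sum_{a \in A} \pi(a \mid s)\, \hat p(s' \mid s,a)$. Since total variation is half the $\ell_1$ distance, the triangle inequality yields
\begin{equation*}
    D_{\text{TV}}\!\left(\mathcal{T}(s' \mid s), \widehat{\mathcal{T}}(s' \mid s)\right) \leq \sum_{a \in A} \pi(a \mid s)\, D_{\text{TV}}\!\left(p(\cdot \mid s,a), \hat p(\cdot \mid s,a)\right),
\end{equation*}
so it suffices to bound a convex combination of the per-action errors.

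Next I would split the sum at the threshold $\eta = \alpha/(|A||S|)$. The actions with negligible probability $\pi(a \mid s) < \eta$ are handled trivially: each contributes at most $\eta$ to the weighted sum because total variation is bounded by $1$, and there are at most $|A|$ of them, so their combined contribution is at most $|A|\eta = \alpha/|S| \leq \alpha$. Such actions may be badly under-sampled, but this is harmless precisely because their mixture weight is small; recognising that the rarely-taken actions can be discarded at negligible cost is the conceptual crux. The remaining error budget is then spent on the non-negligible actions, each of which is assumed to have been picked from $s$ at least $m$ times.

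For a fixed non-negligible action $a$, I would note that, conditional on visiting $(s,a)$, the Markov property makes the observed next states i.i.d.\ draws from $p(\cdot \mid s,a)$, so for each fixed $s'$ the estimate $\hat p(s' \mid s,a) = c(s',s,a)/v(s,a)$ is an average of $m$ i.i.d.\ Bernoulli indicators. Hoeffding's inequality then gives $\Pr(|\hat p(s' \mid s,a) - p(s' \mid s,a)| \geq t) \leq 2\exp(-2mt^2)$, and a union bound over the $|S|$ possible next states controls the whole $\ell_1$ error: on the good event every coordinate lies within $t$, whence $D_{\text{TV}}(p(\cdot \mid s,a), \hat p(\cdot \mid s,a)) \leq |S|t/2$. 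Choosing $t \asymp \alpha/|S|$ makes each per-action error a constant fraction of $\alpha$, and a further union bound over the (at most $|A|$) non-negligible actions multiplies the failure probability by $|A|$, producing the factor $2|A||S|$ inside the logarithm. Summing the negligible and non-negligible contributions keeps the total below $\alpha$, and solving $2|A||S|\exp(-2mt^2) \leq \delta$ for $m$ with $t \asymp \alpha/|S|$ gives the stated bound $m \geq \tfrac{|S|^2}{\alpha^2}\log(2|A||S|/\delta)$, where the $|S|^2$ factor arises from the $1/t^2 \asymp |S|^2/\alpha^2$ scaling of Hoeffding.

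The main obstacle I anticipate is the bookkeeping in allocating the error budget $\alpha$ and the failure budget $\delta$ across the two union bounds (over next states and over actions) while keeping the final constant consistent with the theorem; the stated $m$ is in fact conservative, so there is slack to absorb both the negligible-action term and the constant lost in converting $\ell_1$ distance to total variation. A secondary point worth stating carefully is the i.i.d.\ justification for applying Hoeffding to the visit counts, which rests on the Markov property at $(s,a)$ rather than on the samples being collected in a single independent batch.
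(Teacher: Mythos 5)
Your proposal is correct and follows essentially the same route as the paper's proof: a per-coordinate Hoeffding bound at accuracy $\alpha/|S|$, a union bound over the $|S||A|$ pairs $(s',a)$, and a split of the action sum at the threshold $\eta$ so that rarely-taken actions are absorbed by their small mixture weight. The only cosmetic difference is that you decompose the total variation into per-action TV distances before splitting, whereas the paper carries the full double sum over $s'$ and $a$ to the end; the resulting bound and the (conservative) constant are the same.
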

\begin{proof}
The proof is an application of Hoeffding's inequality \cite{hoeffding1994probability} to categorical distributions, see Appendix \ref{sec:proofs}.
\end{proof}
The strong dependence on $|S|$ here can make this result quite unwieldy. However, for low-rank or low-dimensional MDPs this dependence on $|S|$ can be suitably replaced. For example in \emph{gridworld} environments, where there are only 4 possible next states, $|S|$ can be replaced with 4. Additionally, for deterministic policies we can ignore the dependence on $|A|$.

\subsection{Partially Observable Setting}

Sample complexity bounds like the one we obtained for the tabular MDP case are a lot harder to obtain for general POMDPs. To say anything meaningful about learning in POMDPs, various assumptions can be made about the structure of the environment \cite{lee2023learning}. We will reason about the underlying POMDP $\mathcal{M}$ in terms of \emph{belief states}. Belief states infer a distribution over possible underlying states given the history of past observations and actions. Formally they are defined as a filtering distribution $p(s_t \mid o_{t\leq t}, a_{\leq t})$. Importantly, conditioning on the entire history of past observations and actions exposes more information about the possible underlying states.

However, maintaining an explicit distribution over possible states is difficult, particularly when we have no idea what the space of possible states is. Instead it is common to learn a latent representation $b_t = f(o_{t\leq t}, a_{\leq t})$ of the history of past observations and actions that captures the important statistics of the filtering distribution, so that $p(s_t \mid o_{t\leq t}, a_{\leq t}) \approx p(s_t \mid b_t)$. We will denote $b_t$ the belief state, although it is actually a compact latent state representation of $o_{t\leq t}, a_{\leq t}$, rather than a distribution over possible states.

\begin{thm} \label{prop:pomdp} Let $b_t$ be a latent representation (belief state) such that $p(s_t \mid o_{t\leq t}, a_{\leq t}) = p(s_t \mid b_t)$. Let the fixed policy $\pi(\cdot \mid b_t)$ be a general probability distribution conditional on belief states $b_t$. Let $f$ be a generic $f$-divergence measure (TV or similar). Then the following holds:
\begin{equation}
    D_{f}(\mathcal{T}(s' \mid b ), \widehat{\mathcal{T}}(s' \mid b )) \leq D_{f}(\mathcal{T}(b' \mid b ), \widehat{\mathcal{T}}(b' \mid b ))
\end{equation}
where $\mathcal{T}$ and $\widehat{\mathcal{T}}$ are the `true' and approximate transition system respectively, defined now over both states $s$ and belief states $b$.
\end{thm}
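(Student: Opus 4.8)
The plan is to recognise this inequality as an instance of the \emph{data-processing inequality} for $f$-divergences, with the belief-to-state decoder playing the role of the common information channel. The key structural observation is that, under the sufficiency hypothesis $p(s_t \mid o_{\leq t}, a_{\leq t}) = p(s_t \mid b_t)$, the distribution over next states given the current belief state factorises through the next belief state $b'$: one first transitions $b \to b'$ according to the (policy-induced) belief dynamics, and then recovers the underlying state $s'$ from $b'$ through the fixed conditional $p(s' \mid b')$.

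First I would write the marginalisation identities
\begin{align*}
    \mathcal{T}(s' \mid b) &= \sum_{b'} p(s' \mid b')\, \mathcal{T}(b' \mid b), \\
    \widehat{\mathcal{T}}(s' \mid b) &= \sum_{b'} p(s' \mid b')\, \widehat{\mathcal{T}}(b' \mid b),
\end{align*}
with the sums replaced by integrals should the belief space be continuous. The essential point to nail down is that the decoding channel $p(s' \mid b')$ appearing in both lines is \emph{the same}: the learned approximation enters only through the predicted belief dynamics (the world-model transition), whereas the map taking a belief state to the distribution over underlying states it encodes is a single fixed conditional, shared by $\mathcal{T}$ and $\widehat{\mathcal{T}}$ alike. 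This is precisely where the hypothesis $p(s_t \mid b_t) = p(s_t \mid o_{\leq t}, a_{\leq t})$ does the work, since it certifies that $p(s' \mid b')$ is well defined independently of which dynamics are rolled out.

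With this in hand, both $\mathcal{T}(s' \mid b)$ and $\widehat{\mathcal{T}}(s' \mid b)$ are the push-forwards of the two belief-state distributions $\mathcal{T}(b' \mid b)$ and $\widehat{\mathcal{T}}(b' \mid b)$ under one common Markov kernel $K(s' \mid b') := p(s' \mid b')$. I would then apply the data-processing inequality $D_f(KP, KQ) \leq D_f(P, Q)$ with $P = \mathcal{T}(b' \mid b)$, $Q = \widehat{\mathcal{T}}(b' \mid b)$, and $K = p(\cdot \mid \cdot)$, which delivers the claim immediately. For completeness I would justify the inequality itself from the joint convexity of the perspective map $(x, y) \mapsto y\, f(x/y)$: writing $D_f(KP, KQ)$ as a sum over $s'$ and applying Jensen's inequality with the weights $K(s'\mid b')Q(b')/(KQ)(s')$ collapses the bound to $D_f(P, Q)$ after summing out $s'$. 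The $f = \text{TV}$ case additionally admits a direct one-line argument via the coupling characterisation of total variation.

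The main obstacle is not the inequality itself — once cast as data-processing it is routine — but the modelling step of verifying that the two transition systems genuinely share the decoder $p(s' \mid b')$, i.e.~that all of the approximation error is confined to the belief-level dynamics. I would make this precise by unfolding the definition of $\widehat{\mathcal{T}}$ over belief states and checking that marginalising $b'$ against the common, policy-independent decoder reproduces the state-level system on the left-hand side; the inequality then quantifies exactly how much of the belief-level discrepancy survives the decoding step.
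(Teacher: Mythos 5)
Your proposal is correct and follows essentially the same route as the paper: both recognise the bound as the data-processing inequality for $f$-divergences applied to the common decoding channel $p(s' \mid b')$ (guaranteed to be shared by $\mathcal{T}$ and $\widehat{\mathcal{T}}$ via the sufficiency hypothesis), and both prove it via convexity of $f$ and Jensen's inequality after factoring the joint $\mathcal{T}(s', b' \mid b)$. Your explicit emphasis that the approximation error must be confined to the belief-level dynamics is exactly the implicit step in the paper's chain of equalities, so there is nothing missing.
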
 

\begin{proof}
The proof is a direct application of the data-processing inequality \cite{ali1966general} and we note similar bounds have been derived in previous work \cite{gangwani2020learning}. See Appendix \ref{sec:proofs} for details.
\end{proof}

What Theorem \ref{prop:pomdp} says, is that by minimising the divergence between the next belief state $b'$ in the `true' transition system $\mathcal{T}$ and the approximate transition system $\widehat{\mathcal{T}}$ (given the current belief $b$), we minimise an upper bound on the divergence between the next underlying state $s'$ in $\mathcal{T}$ and $\widehat{\mathcal{T}}$. This objective is precisely encoded in the world model loss function of DreamerV3 \cite{hafner2023mastering}. And so by using DreamerV3 (or similar architectures) we aim to minimise the $TV$ distance between $\mathcal{T}$ and $\widehat{\mathcal{T}}$ in the hope that we may use the results stated earlier in this section. 

\section{AMBS with DreamerV3}

In this section we present our practical implementation of AMBS with DreamerV3 \cite{hafner2023mastering}. We start by detailing the important components of the algorithm, before describing the shielding procedure in detail and justifying some of the important algorithmic decisions. An outline for the full procedure is presented in Appendix \ref{sec:algorithms}.
% paragraph

%AMBS is split into two main phases: the \emph{optimisation phase}, which includes (i) world model learning,  (ii)  policy optimisation, and (iii) critic learning, and the \emph{environment interaction phase}, which consists of collecting experience from the environment with a shielded policy. Algorithm \ref{alg:ambs} describes the entire procedure high-level, although before studying it we encourage the reader to read the entirety of this section. 

\subsection{Algorithm Components}

\paragraph{RSSM with Costs.} We augment DreamerV3 \cite{hafner2023mastering} with a cost predictor head $\hat c_t \sim p_{\theta}(\hat c_t \mid h_t, z_t)$ used to predict state dependent costs. Similar to the reward head, the cost predictor is also implemented as a neural network that parameterises a \emph{twohot symlog} distribution \cite{hafner2023mastering}. Targets for the cost predictor are constructed as follows,
\begin{equation}
    c_t = \begin{cases}
    0, & \text{if $s_t \models \Psi$} \\
    C, & \text{otherwise}
\end{cases} \label{eq:costtarget}
\end{equation}
where $s_t \models \Psi$ can be determined using the labels $L(s_t)$ provided at each timestep (see Section \ref{sec:problem}), and $C>0$ is a hyperparameter that determines the cost of violating the propositional safety-formula $\Psi$. We claim that using a cost function in this way rather than a binary classifier, allows the agent to distribute its uncertainty about a constraint violation over consecutive states, since the predicted cost at a latent state $(h_t, z_t)$ may lie anywhere on the real number line. 

\paragraph{Safe Policy.} The safe policy denoted $\pi^{\text{safe}}$ is used as the backup policy \cite{hans2008safe} if we detect that a safety-violation is likely to occur in the near future when following the task policy $\pi^{\text{task}}$. The goal of the safe policy is to prevent the agent from committing the safety-violation that was `likely' to happen in the near future. 

Unlike classical shielding methods that synthesise a backup policy ahead of time \cite{alshiekh2018safe}, we must learn a suitable backup policy as we have no knowledge of the safety-relevant dynamics of the system a priori. Although, we note that if such a backup policy is available ahead of time, either because it is easy to determine (i.e. breaking in a car) or significant engineering effort has gone into creating a backup policy, then this can be easily integrated into our framework. However, in general we train a separate policy $\pi^{\text{safe}}$ and critic $v^{\text{safe}}$ using the same TD-$\lambda$ actor-critic style algorithm used for the task policy \cite{hafner2023mastering}. However, we minimise expected discounted costs and ignore any reward signals entirely and so specifically we optimise the following objective,
\begin{equation}
    \min \mathbb{E}_{\pi^{\text{safe}}, p_{\theta}}\left[\sum_{t=1}^{\infty} \hat{\gamma}^{t-1} \hat c_t \right] \label{eq:mincost}
\end{equation}
In theory, by minimising expected costs we should learn a good backup policy, since by construction we only incur a cost $>0$ when $\Psi$ is not satisfied.
%\fb{Is it really the case? do we have a proof?}

\paragraph{Safety Critics.} Safety critics are used to estimate the expected discounted costs under the state distribution of the task policy. In essence, they give us an idea of how safe a given state $s \in S$ 
%a given state 
is if we were to follow the task policy $\pi^{\text{task}}$ from that state $s$. As a result we can use them to bootstrap the end of `imagined' trajectories for further look-ahead capabilities, without having to roll-out the dynamics model any further. We jointly train two safety critics $v^C_1$ and $v^C_2$ with a TD3-style algorithm \cite{fujimoto2018addressing} to prevent overestimation, which is important for reducing overly conservative behaviour. We also train a separate safety discount predictor head $\hat \gamma^{\text{safe}}_t \sim p_{\theta}(\hat \gamma^{\text{safe}}_t \mid h_t, z_t)$ to help with overestimation. Specifically, $v^C_1$ and $v^C_2$ estimate the following quantity,
\begin{equation}
    V^{C}(s) = \mathbb{E}_{\pi^{\text{task}}, p_{\theta}}\left[\sum_{t=1}^{\infty} (\hat{\gamma}^{\text{safe}}_t)^{t-1} \hat c_t \: \Big\vert \: s_0 = s \right] \label{eq:safeyval}
\end{equation}
The safety discount head is a binary classifier that predicts safety-violations, we use it to implicitly transform the MDP into one where violating states are terminal, this ensures that the safety critics are always upper bounded by $C$ and any costs incurred after a safety-violation are not counted. We note that this is an important property to maintain for checking bounded safety.

\subsection{Shielding Procedure} 

During environment interaction we deploy our approximate shielding procedure to try and prevent the task policy $\pi^{\text{task}}$ from violating the safety-formula $\Psi$. Actions that lead to violations in the learned world model $p_{\theta}$ do not count since they are not actually committed in the `real world'.

Algorithm \ref{alg:shield} details the approximate shielding procedure. It takes as input an approximation error $\epsilon$, desired safety level $\Delta$ (from the PCTL formula), the current latent state $\hat s = (z, h)$, an action $a$ proposed by the task policy, the `imagination' horizon $H$, the look-ahead shielding horizon $T$, and the number of samples $m$, along with the RSSM components and the learned policies. The shielding procedure works by sampling $m$ traces from the approximate transition system $\widehat{\mathcal{T}}$ (obtained by sampling actions with $\pi^{\text{task}}$ in $p_{\theta}$), checking whether each trace satisfies bounded safety and returning the proportion of satisfying traces. If the estimate obtained $\tilde \mu_{s \models \phi}$ is in the interval $[1 - \Delta + \epsilon, 1]$ then the proposed action $a$ is verified safe, otherwise we pick an action with the safe policy $\pi^{\text{safe}}$ instead.

\begin{algorithm}[!t]
\caption{Approximate Model-Based Shielding (AMBS)}
\raggedright
\label{alg:shield}
\textbf{Input:} approximation error $\epsilon$, desired safety level $\Delta$, current state $\hat s = (z, h)$, proposed action $a$, `imagination' horizon $H$, look-ahead shielding horizon $T$, number of samples $m$, RSSM $p_{\theta}$, safe policy $\pi^{\text{safe}}$ and task policy $\pi^{\text{task}}$.  \\
\textbf{Output:} shielded action $a'$.
\begin{algorithmic}
\For{$i = 1, ..., m$}
    \State Let $\hat s_0 = (h, z)$
    \State \textit{// Sample trace}
    \State From $\hat s_0$ play $a_0$ and sample trace $\tau = \hat s_{1:H}$ with $\pi^{\text{task}}$ and $p_{\theta}$.
    \State Using $\tau = \hat s_{1:H}$ compute sequences $\hat c_{1:H}$ and $\hat \gamma_{1:H}$ with $p_{\theta}$.
    \State \textit{// Check trace is satisfying}
    \State $X_i = \mathbbm{1}\left[\text{cost}(\tau) < \gamma^{T-1} \cdot C \right]$ using Eq.~\ref{eq:cost} or Eq.~\ref{eq:costcritic}.
\EndFor
\State Let $\tilde \mu_{s \models \phi} = \frac{1}{m}\sum^{m}_{i=1} X_i$
\State \textbf{If} $\tilde \mu_{s \models \phi} \in [1 - \Delta + \epsilon, 1]$ \textbf{return} $a' = a$ \textbf{else} \textbf{return} $a' \sim \pi^{\text{safe}}$
\end{algorithmic}
\end{algorithm}

\begin{proposition}
Suppose we have an estimate $\tilde \mu_{s\models\phi} \in [\mu_{s\models\phi} - \epsilon, \mu_{s\models\phi} + \epsilon]$, if $\tilde \mu_{s\models\phi} \in [1 - \Delta + \epsilon, 1]$ then it must be the case that $\mu_{s\models\phi} \in [1 - \Delta, 1]$ and therefore $s \models \mathbb{P}_{\geq 1 -  \Delta}(\square^{\leq n} \Psi )$.
\label{prop:interval}
\end{proposition}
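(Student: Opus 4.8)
The plan is to prove this by elementary interval arithmetic, so I would first unpack the two hypotheses into the inequalities that actually do the work. The assumption $\tilde\mu_{s\models\phi} \in [\mu_{s\models\phi} - \epsilon, \mu_{s\models\phi} + \epsilon]$ is equivalent to $|\tilde\mu_{s\models\phi} - \mu_{s\models\phi}| \leq \epsilon$, and of its two consequences only the lower-bound direction is relevant here, namely $\mu_{s\models\phi} \geq \tilde\mu_{s\models\phi} - \epsilon$. Separately, the membership condition $\tilde\mu_{s\models\phi} \in [1 - \Delta + \epsilon, 1]$ gives in particular $\tilde\mu_{s\models\phi} \geq 1 - \Delta + \epsilon$.

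Next I would simply chain these: substituting the second inequality into the first yields $\mu_{s\models\phi} \geq \tilde\mu_{s\models\phi} - \epsilon \geq (1 - \Delta + \epsilon) - \epsilon = 1 - \Delta$. The design choice worth highlighting is that the verification interval is deliberately shifted upward by the additive $\epsilon$ precisely so that this $\epsilon$ cancels the worst-case underestimation of the Monte-Carlo estimate, which is what makes the test conservative (sound) rather than merely approximate. For the upper endpoint no estimate is needed at all: $\mu_{s\models\phi}$ is the value of the probability measure $\mu_s$ on a measurable set of traces (as set up around Eq.~\ref{eq:eboundedsafety}), so $\mu_{s\models\phi} \leq 1$ holds automatically. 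Combining the two bounds gives $\mu_{s\models\phi} \in [1 - \Delta, 1]$, which is exactly the defining condition of $\Delta$-bounded safety, whence $s \models \mathbb{P}_{\geq 1 - \Delta}(\square^{\leq n}\Psi)$ follows immediately from the PCTL semantics recalled in Eq.~\ref{eq:eboundedsafety}.

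The honest assessment is that there is no real obstacle here — the statement is a one-line consequence of the triangle inequality, and the whole proposition is essentially a sanity check that the threshold used in Algorithm~\ref{alg:shield} is the correct conservative one. The only subtlety I would be careful about is the asymmetry: only the lower bound requires the $\epsilon$-buffer argument, while the upper bound comes for free from $\mu_{s\models\phi}$ being a probability, so one should resist the temptation to symmetrically tighten the interval at the top, which would be both unnecessary and needlessly conservative.
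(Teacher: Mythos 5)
Your proof is correct and is essentially the same argument as the paper's: you chain $\mu_{s\models\phi} \geq \tilde\mu_{s\models\phi} - \epsilon \geq 1-\Delta$ directly, whereas the paper phrases the identical inequality as a contradiction from assuming $\mu_{s\models\phi} \notin [1-\Delta,1]$. Your explicit remark that the upper endpoint is free because $\mu_{s\models\phi}$ is a probability is a small clarity improvement over the paper's version, which leaves that step implicit.
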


\begin{proof}
Suppose $\tilde \mu_{s\models\phi} \in [\mu_{s\models\phi} - \epsilon, \mu_{s\models\phi} + \epsilon]$, $\tilde \mu_{s\models\phi} \in [1 - \Delta + \epsilon, 1]$ and $\mu_{s\models\phi} \not\in [1 - \Delta, 1]$. Then $\tilde \mu_{s\models\phi} - \mu_{s\models\phi} > \epsilon $ which contradicts $\tilde \mu_{s\models\phi} \in [\mu_{s\models\phi} - \epsilon, \mu_{s\models\phi} + \epsilon]$. This implies that indeed $\mu_{s\models\phi} \in [1 - \Delta, 1]$ and therefore $s \models \mathbb{P}_{\geq 1 -  \Delta}(\square^{\leq n} \Psi )$ by Eq.~\ref{eq:eboundedsafety}.
\end{proof}

Proposition \ref{prop:interval} justifies checking that the condition $\tilde \mu_{s \models \phi} \in [1 - \Delta + \epsilon, 1]$ holds in Algorithm \ref{alg:shield}. Since if $\tilde \mu_{s \models \phi}$ is an $\epsilon$-estimate of $\mu_{s \models \phi}$ then $\tilde \mu_{s \models \phi} \in [1 - \Delta + \epsilon, 1]$ necessarily implies that the current state $s$ satisfies $\Delta$-bounded safety. We present one more proposition that is required to understand how we determine (or verify) that a trace satisfies bounded safety.
\begin{proposition}
\label{prop:cost}
    Suppose we have a trace $\tau$ with cost given by,
    \begin{equation}
        \text{cost}(\tau) = \sum^H_{t=1} (\hat \gamma_t)^{t-1} \hat c_t 
        \label{eq:cost}
    \end{equation}
    Under the `true' transition system $\mathcal{T}$ if $\textrm{cost}(\tau) < \gamma^{H-1} \cdot C$ then necessarily $\tau \models \square^{\leq H} \Psi$.
\end{proposition}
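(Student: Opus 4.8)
The plan is to prove the contrapositive: I would assume that $\tau \not\models \square^{\leq H}\Psi$ and derive $\textrm{cost}(\tau) \geq \gamma^{H-1}\cdot C$, which directly contradicts the hypothesis $\textrm{cost}(\tau) < \gamma^{H-1}\cdot C$. Recall from Eq.~\ref{eq:costtarget} that each per-step cost satisfies $\hat c_t = 0$ when $\tau[t] \models \Psi$ and $\hat c_t = C$ otherwise, so under the `true' transition system $\mathcal{T}$ — where the predicted costs and discounts coincide with the exact quantities — every term $(\hat\gamma_t)^{t-1}\hat c_t$ appearing in Eq.~\ref{eq:cost} is non-negative.

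Next I would isolate the first violating state. If $\tau \not\models \square^{\leq H}\Psi$, then (taking the current state $\tau[0]$ to be safe) there is a smallest index $t^* \in \{1,\dots,H\}$ with $\tau[t^*] \not\models \Psi$, hence $\hat c_{t^*} = C$. The crucial observation is that all states $\tau[t]$ for $t < t^*$ are safe, so no termination has been triggered by the safety-discount construction before step $t^*$; consequently the discount carried at those steps is the full $\gamma$, and the accumulated weight on the first violation is exactly $\gamma^{t^*-1}$. This singles out the term $(\hat\gamma_{t^*})^{t^*-1}\hat c_{t^*} = \gamma^{t^*-1}\cdot C$ inside the sum.

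I would then conclude with a monotonicity argument on the discount exponent. Since $\gamma \in (0,1]$ and $t^* \leq H$, we have $\gamma^{t^*-1} \geq \gamma^{H-1}$; combining this with the non-negativity of every remaining term yields $\textrm{cost}(\tau) \geq \gamma^{t^*-1}\cdot C \geq \gamma^{H-1}\cdot C$, the desired contradiction.

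The main obstacle lies in the second step: I must justify that the accumulated discount at the first violating state is precisely $\gamma^{t^*-1}$. This rests on the construction rendering violating states terminal (so the safety-discount $\hat\gamma$ drops only at or after a violation, while all preceding safe states carry the full discount $\gamma$), together with the fact that I only need the single first-violation term as a lower bound — I therefore never have to reason about the possibly-vanishing contributions of the states after $t^*$. Some care is also needed at the boundary $\gamma = 1$, where the exponent inequality degenerates to an equality, but the argument still goes through since a violation forces $\textrm{cost}(\tau) \geq C$.
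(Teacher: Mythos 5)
Your proof is correct and is essentially the paper's argument written out in contrapositive form: the paper directly asserts that $\textrm{cost}(\tau) < \gamma^{H-1}\cdot C$ forces every $c_t = 0$ (hence every $\tau[t]\models\Psi$), and the justification for that assertion is exactly your observation that a first violation at step $t^*$ would contribute $\gamma^{t^*-1}C \geq \gamma^{H-1}C$ to the non-negative sum. Your version is more explicit about the role of the safety-discount and the $\gamma\le 1$ monotonicity, but there is no substantive difference in approach.
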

\begin{proof}
The proof is a straightforward argument. By construction $c_t = C$ if and only if $\tau[t] \not \models \Psi$, therefore $\textrm{cost}(\tau) < \gamma^{H-1} \cdot C$ implies that $\forall t \; 1 \leq t \leq H$ we have $c_t = 0$, which implies that $\forall t \; 1 \leq t \leq H$ we have $\tau[t] \models \Psi$. 
\end{proof}
Furthermore, with safety critics we can compute the cost of a trace in a similar way:
%\fb{in which way similar?}
\begin{equation}
        \textrm{cost}(\tau) = \sum^{H-1}_{t=1} (\hat\gamma_t)^{t-1} \hat c_t + \min\left\{v^C_1(h_H, \hat z_t), v^C_2(h_H, \hat z_t)  \right\}
        \label{eq:costcritic}
\end{equation}
Provided that the safety critics accurately capture the expected discounted costs from beyond the `imagination' horizon $H$ then we can check bounded safety with a larger horizon $T > H$ and derive something similar to Proposition \ref{prop:cost}. In this situation it may be clear now why we want the safety critics to be upper bounded by $C$. Since a state $s \in S$ that satisfies bounded safety could have a value $V^C(s) > \gamma^{T-1} \cdot C$ if two or more violations occurred beyond the horizon $T$; further in the future than what we care about.

%\section{Theoretical Results}
%\label{sec:theory}

%\fb{Let's think of a more precise title for this section.}

%In this section we present a strong theoretical basis for AMBS. Starting from the fully observable setting we derive PAC-style probabilistic bounds on the probability of estimating the probability of a constraint violation under both the `true' transition system and a learned approximation. We show these hold under mild assumptions and we demonstrate when the assumptions hold in the fully observable setting. Finally, we provide a strong theoretical justification for the use of world models in the partially observable setting, in which guarantees are hard to come by. For completeness, extended proofs are provided in Appendix \ref{sec:proofs}.

\section{Experimental Evaluation}

In this section we present the results of running AMBS equipped with DreamerV3 \cite{hafner2023mastering} on a set of Atari games with state-dependent safety-labels \cite{giacobbe2021shielding}. First, we detail the corresponding safety-formula for each of the games. We then cover with the training details, followed by the results and a brief discussion.

\paragraph{Atari Games with Safety-labels.} We evaluate each agent on a set of five Atari games provided in the ALE \cite{bellemare13arcade}. Following \cite{machado18arcade} we use sticky actions (action repeated with probability 0.25), frame skip ($k=4$), and provide no life information unless it is explicitly encoded in the safety-formula. We leverage the state-dependent safety-labels provided in prior work \cite{giacobbe2021shielding} to construct the safety-formula, Table \ref{tab:envs} outlines each of the Atari environments used in our experiments. For a more precise description of each of the environments we refer the reader to \cite{machado18arcade}. We opt for this setting, as opposed to more standard safe RL benchmarks like SafetyGym \cite{ray2019benchmarking}, as it has been used in previous work \cite{giacobbe2021shielding, goodall2023approximate} and it provides a richer set of safety-constraints and corresponding behaviours that much be learnt.
\begin{table}
\begin{center}
{\caption{Atari environments and their corresponding safety-formula}\label{tab:envs}}
\begin{tabular}{c|c}
\hline
\rule{0pt}{8pt}
{\bf Environment} &  {\bf safety-formula} $\Psi$ \\
\hline
Assault & $\neg \textbf{hit} \land \neg \textbf{overheat}$\\
DoubleDunk & $\neg \textbf{out-of-bounds}\land \neg \textbf{shoot-bf-clear}$\\
Enduro & $\neg \textbf{crash-car}$\\
KungFuMaster & $\neg \textbf{loose-life} \land \neg \textbf{energy-loss}$\\
Seaquest & $(\textbf{surface} \Rightarrow \textbf{diver}) \land \neg \textbf{hit} \land \neg \textbf{out-of-oxygen}$\\
\hline
\end{tabular}
\end{center}
\end{table}
\paragraph{Training Details} Each agent is trained on a single Nvidia Tesla A30 (24GB RAM) GPU and a 24-core/48 thread Intel Xeon CPU with 256GB RAM. Due to constraints on compute resources, each agent is trained for 10M environment interactions (40M frames) as opposed to the typical 50M \cite{hafner2020mastering, hafner2023mastering} and our experiments are only run on one seed.

To test robustness, all hyperparameters are fixed over all runs for all agents. The hyperparameters of DreamerV3 for the Atari benchmark are detailed in \cite{hafner2019dream}, most notably the `imagination' horizon $H=15$. The AMBS hyperparameters are also fixed in all experiments as follows, bounded safety level $\Delta = 0.1$, approximation error $\epsilon=0.09$, number of samples $m=512$, and look-ahead shielding horizon $T=30$. Additional implementation details and hyperparameter settings can be found in Appendix \ref{sec:hyperparams}.

\paragraph{Results} We evaluate the effectiveness of AMBS equipped with DreamerV3 \cite{hafner2023mastering} by comparing it with the following algorithms, vanilla DreamerV3 without any shielding procedure or access to the cost function, a safety-aware DreamerV3 that implements an augmented Lagrangian penalty framework (LAG) \cite{wright2006numerical, ray2019benchmarking}, since this method is based on DreamerV3 it should act as a more meaningful baseline compared to similar model-free counterparts like PPO- and TRPO- Lagrangian \cite{ray2019benchmarking} that are not optimised for Atari. 

In addition, we provide results for two state-of-the-art (single GPU) model-free algorithms, Rainbow \cite{hessel2018rainbow} and Implicit Quantile-Network (IQN) \cite{dabney2018implicit}. While these model-free algorithms don't really provide a meaningful baseline, we include them (to the same end as \cite{giacobbe2021shielding}) to demonstrate that state-of-the-art Atari agents frequently violate quite straightforward safety properties that we would expect them to satisfy. We also note that both BPS \cite{giacobbe2021shielding} and classical shielding \cite{alshiekh2018safe} are too unwieldy to be used during training in this setting, and so we cannot obtain a meaningful comparison between these approaches and AMBS.

We compare the five algorithms by recording the cumulative number of violations and the best episode score obtained during a single run of 10M environment interactions (40M frames). Table \ref{tab:results} presents the results. We also provide learning curves for each of the agents in Appendix \ref{sec:curves}.

\begin{table*}[!h]
    \begin{center}
    {\caption{Best episode scores and total violations for the five Atari games. \textit {Blank ($-$) denotes that the agent failed to converge.}}\label{tab:results}}
    \begin{tabular}{cc|ccccc}
    \hline
    \rule{0pt}{12pt}
         & & DreamerV3 & DreamerV3 (AMBS) & DreamerV3 (LAG) & IQN & Rainbow \\
         \hline
         \multirow{2}{6em}{Assault}& Best Score $\uparrow$ &14738 & \textbf{44467} & 19832 & 9959&9632 \\
            & \# Violations $\downarrow$ &18745 &\textbf{12638} &16802 &24462 &24019 \\
        \hline
        \multirow{2}{6em}{DoubleDunk}& Best Score $\uparrow$ & \textbf{24} & \textbf{24} & \textbf{24} & \textbf{24} & - \\
            & \# Violations $\downarrow$ & 877499& \textbf{66248} & 359018 &188363 & - \\
        \hline
        \multirow{2}{6em}{Enduro}& Best Score $\uparrow$ & 2369 &2367 &2365 & 2375 & \textbf{2383} \\
            & \# Violations $\downarrow$ & 167933 & \textit{132147} & 174217 & 129012 & \textbf{108000} \\
        \hline
        \multirow{2}{6em}{KungFuMaster}& Best Score $\uparrow$ & 97000&\textbf{117200} &97200 &51600 &59500 \\
            & \# Violations $\downarrow$ & 427476 &\textbf{10936}  & 567559 & 284909 & 612762 \\
        \hline
        \multirow{2}{6em}{Seaquest}& Best Score $\uparrow$ & 4860 & \textbf{145550} & 1940 & 34150 & 1900 \\
            & \# Violations $\downarrow$ & 73641 & \textbf{40147} & 64679 & 53516 & 67101 \\
        \hline
    \end{tabular}
    \end{center}
\end{table*}

\paragraph{Discussion.}

As we can see in Table \ref{tab:results}, our approach (AMBS) compared with the other DreamerV3 \cite{hafner2023mastering} based agents dramatically reduces the total number of safety-violations during training in all five of the Atari games. In terms of reward, AMBS also achieves comparable or better best episode scores in all five of the Atari games. This is likely because the objective of maximising rewards and minimising constraints are suitably aligned, which we note might not always be the case. LAG fails to reliably reduce the total number of safety-violations in all the Atari games, which could suggest this method is more sensitive to hyperparameter tuning. Clearly the extra machinery introduced by AMBS helps us to obtain a dynamic algorithm that can effectively switch between reward maximising and constraint satisfying policies, which on this set of Atari games demonstrates a significant improvement in performance compared to other safety-aware and state-of-the-art algorithms. Interestingly, the model-free algorithms do better across the board on \emph{Enduro}, it is not immediately clear why this is the case and this result may need further investigation. Perhaps it is the case that DreamerV3 is not quite as suited to Enduro as IQN or Rainbow, although we note that AMBS improves on DreamerV3 w.r.t.~safety-violations while maintaining comparable performance.

\section{Related Work}

\paragraph{Model-based RL.} Model-based RL as a paradigm for learning complex policies has become increasingly popular in recent years due to its superior sample efficiency \cite{hafner2019dream, janner2019trust}. Dyna -- ``an integrated architecture for learning, planning and reacting'' \cite{sutton1991dyna} proposed an architecture that learns a dynamics model of the environment in addition to a reward maximising policy. In theory, by utilising the dynamics model for planning and/or policy optimisation we should learn better policies with fewer experience. Model-based policy optimisation (MBPO) \cite{janner2019trust} is a sample-efficient neural architecture for optimising policies in learned dynamics model. More recently, more sophisticated neural architectures such as Dreamer \cite{hafner2019dream} and DreamerV2 \cite{hafner2020mastering} have been proposed that have demonstrated state-of-the-art performance on the DeepMind Visual Control Suite \cite{tassa2018deepmind} and the Atari benchmark \cite{bellemare13arcade, machado18arcade} respectively. Our work is built on DreamerV3 \cite{hafner2023mastering} which has demonstrated superior performance in a number of domains (with fixed hyperparameters) including the Atari benchmark \cite{bellemare13arcade, machado18arcade} and MineRL \cite{guss2019neurips}, which is regarded as a notoriously difficult exploration challenge in the RL literature.

\paragraph{Shielding.}
Shield synthesis for RL was first introduced in \cite{alshiekh2018safe} as a method for preventing learned policies from entering an unsafe region of the state space defined by a given temporal logic formula. In contrast to other safety-aware approaches based on soft penalties \cite{achiam2017constrained, thomas2021safe}, shielding enforces hard constraints by directly overriding unsafe actions with a verified backup policy. The reactive shield, which includes the backup policy, is computed ahead of time to ensure no training violations (\emph{correctness}). To compute the reactive shield a safety automaton is constructed from a safety-relevant abstraction of the game with known dynamics and a safety game \cite{bloem2015shield} is then solved.

\emph{Look-ahead shielding}  \cite{giacobbe2021shielding, he2021androids, xiao2023model, goodall2023approximate} is a more dynamic approach to shielding that aims to alleviate some of the restrictive requirements that come with classical shielding. The key benefits of look-ahead shielding include, (1) better computational efficiency as only the reachable subset of the state space needs to be checked, (2) can be applied online in real-time from the current state. Although we note that look-ahead shielding can be done ahead of time \cite{xiao2023model} similar to classical shielding. Inspired by \emph{latent shielding} \cite{he2021androids} our approach can very much be categorised as an online (approximate) look-ahead shielding approach.
% lookahead shielding

%\emph{k-step look-ahead shields} have also been introduced in more recent work \cite{xiao2023model} that assumes the state space has a suitable symbolic abstraction, where the dynamics are known or at least accurately learnt. A safety game \cite{bloem2015shield} is then constructed and solved to synthesise a reactive shield for checking general $k$-bounded temporal properties. This method closely resembles classical shielding approaches \cite{alshiekh2018safe, jansen2018shielded}, although it is more computationally efficient as the safety game needs to only be played on a subset of the state space.
% how classical shielding works
% look-ahead shielding
% k step lookahead sheidling
% BPS and latent shielding prior work

\section{Conclusions}

In this paper we introduce and describe approximate model-based shielding (AMBS), a general purpose framework for shielding RL policies by simulating and verifying possible futures with a learned dynamics model. In contrast with the majority of work on shielding \cite{alshiekh2018safe}, we apply our approach in a much less restrictive setting, where we only require access to an expert labelling of the states, rather than having the safety-relevant dynamics be known. In addition, we obtain further look-ahead capabilities in comparison to previous work, such as \emph{latent shielding} \cite{he2021androids} and BPS \cite{giacobbe2021shielding}, by utilising safety critics that are trained to predict expected costs. 

Compared with other safety-aware approaches, we empirically show that DreamerV3 \cite{hafner2023mastering} augmented with AMBS demonstrates superior performance in terms of episode return and cumulative safety-violations, on a set of Atari games with state-dependent safety-labels. We also develop a rigorous set of theoretical results that underpin AMBS, which includes strong probabilistic guarantees in the tabular setting. We stress the importance of these results, as it allows operators to meaningfully choose hyperparameter settings based on their specific requirements w.r.t.~safety-guarantees. 

All the components of our algorithm must be learnt from scratch, which unfortunately means that there are few guarantees during early stage training. Although in principle we could jump start the learning process with offline or expert data, this would be an interesting paradigm to explore. Important future work also includes, empirically verifying our theoretical results for the tabular setting and applying AMBS to more standard safe RL benchmarks, like SafetyGym \cite{ray2019benchmarking}, that are not frequently used in the shielding literature. 

\ack This work was supported by UK Research and Innovation [grant number EP/S023356/1], in the UKRI Centre for Doctoral Training in Safe and Trusted Artificial Intelligence (\url{www.safeandtrustedai.org}).

\bibliography{ecai}

\newpage

\appendix
\section{Algorithms}
\label{sec:algorithms}
\subsection{DreamerV3 with AMBS}

\begin{algorithm}[!htb]
\caption{DreamerV3 \cite{hafner2023mastering} with AMBS}
\raggedright
\label{alg:ambs}
\textbf{Initialise:} replay buffer $\mathcal D$ with $S$ random episodes and RSSM parameters $\theta$ randomly.\\
\begin{algorithmic}
\While{not converged}
\State \textit{// World model learning}
\State Sample $B$ sequences $ \{ \langle o_t, a_t, r_t, c_t, \gamma^{\text{safe}}_t, o_{t+1} \rangle^{k+H}_{t=k} \} \sim \mathcal{D}$.
\State Update RSSM parameters $\theta$ with representation learning \cite{hafner2023mastering}.
\State \textit{// Task policy optimisation}
\State `Imagine' trajectory $\sigma$ from every $o_t$ in every seq.~with $\pi^{\text{task}}$.
\State Train $\pi^{\text{task}}$ with TD-$\lambda$ actor-critic to optim.~Eq.~\ref{eq:maxrew}.
\State \textit{// Safety critic optimisation}
\State Train $v^C_1$ and $v^C_2$ with maximum likelihood to estim.~Eq.~\ref{eq:safeyval}.
\State \textit{// Safe policy optimisation}
\State `Imagine' trajectory $\sigma$ from every $o_t$ in every seq.~with $\pi^{\text{safe}}$.
\State Train $\pi^{\text{safe}}$ with TD-$\lambda$ actor-critic to optim.~Eq.~\ref{eq:mincost}.
\State \textit{// Environment interaction}
\For{$k = 1, ..., K$} 
\State Observe $o_t$ from environment and compute $\hat s_t = (z_t, h_t)$.
\State Sample action $a \sim \pi^{\text{task}}$ with the task policy.
\State Shield the proposed action $a' = \text{shield}(a)$ and play $a'$.
\State Observe $r_t, o_{t+1}$ and $L(s_t)$ and construct $c_t$ and $\gamma^{\text{safe}}_t$.
\State Append $\langle o_t, a_t, r_t, c_t, \gamma^{\text{safe}}_t, o_{t+1} \rangle$ to $\mathcal{D}$.
\EndFor
\EndWhile
\end{algorithmic}
\end{algorithm}

\section{Proofs}
\label{sec:proofs}
We provide proofs for the theorems introduced in Section \ref{sec:ambs}.
\subsection{Proof of Theorem \ref{prop:boundonm}}
\begin{restatedthm}{\ref{prop:boundonm} Restated}

Let $\epsilon > 0$, $\delta > 0$, $s \in S$ be given. With access to the `true' transition system $\mathcal{T}$, with probability $1 - \delta$ we can obtain an $\epsilon$-approximate estimate of the measure $\mu_{s\models\phi}$, by sampling $m$ traces $\tau \sim \mathcal{T}$, provided that,

\begin{equation*}
    m \geq \frac{1}{2\epsilon^2} \log\left(\frac{2}{\delta}\right) 
\end{equation*}
\end{restatedthm}

\begin{proof}
    We estimate $\mu_{s\models\phi}$ by sampling $m$ traces $\langle\tau_j\rangle^m_{j=1}$ from $\mathcal{T}$. Let $X_1, ..., X_m$ be indicator r.v.s such that,

\begin{equation*}
    X_j = \begin{cases}
        1 & \text{if $\tau_j \models \square^{\leq n} \Psi$,}\\
        0 & \text{otherwise}
    \end{cases}
\end{equation*}
Let,
\begin{equation*}
    \tilde \mu_{s\models\phi} = \frac{1}{m} \sum^m_{j=1}X_j, \; \text{where} \;\: \mathbb{E}_{\mathcal{T}}[\tilde \mu_{s\models\phi}] = \mu_{s\models\phi} 
\end{equation*}
Then by Hoeffding's inequality,
$$\mathbb{P}\left[|\tilde \mu_{s\models\phi} - \mu_{s\models\phi} | \geq \epsilon \right]\leq 2\exp\left( - 2 m \epsilon^2 \right)$$
Bounding the RHS from above with $\delta$ and rearranging completes the proof.
\end{proof}
The only caveat is arguing that $\tau_j \models \square^{\leq n} \Psi$ is easily checkable. Indeed this is the case (for polynmial $n$) because in the fully observable setting we have access to the state and so we can check that $\forall i \; \tau[i] \models \Psi$.

\subsection{Proof of Theorem \ref{prop:tv}}

We split the proof into two parts for Theorem \ref{prop:tv}, first we present the \emph{error amplification} lemma \cite{rajeswaran2020game}, followed by the full proof of Theorem \ref{prop:tv}.

\begin{lemma}[error amplification] Let $\mathcal{T}(s' \mid s)$ and $\widehat{\mathcal{T}}(s' \mid s)$ be two transition systems with the same initial state distribution $\iota_{init}$. Let $\mathcal{T}^t(s)$ and $\widehat{\mathcal{T}}^t(s)$ be the marginal state distribution at time $t$ for the transitions systems $\mathcal{T}$ and $\widehat{\mathcal{T}}$ respectively. That is,
\begin{eqnarray*}
    \mathcal{T}^t(s) = \mathbb{P}_{\tau \sim \mathcal{T}}[\tau[t] = s] \\
    \widehat{\mathcal{T}}^t(s) = \mathbb{P}_{\tau \sim \widehat{\mathcal{T}}}[\tau[t] = s]
\end{eqnarray*}
Suppose that,
\begin{equation*}
	D_{\text{TV}}\left(\mathcal{T}(s' \mid s), \widehat{\mathcal{T}}(s' \mid s) \right) \leq \alpha \; \forall s \in S
\end{equation*}
then the marginal distributions are bounded as follows,
\begin{equation*}
	D_{\text{TV}}\left(\mathcal{T}^t, \widehat{\mathcal{T}}^t \right) \leq \alpha t \; \forall t
\end{equation*}
\label{lemma:marginal}
\end{lemma}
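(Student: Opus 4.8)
The plan is to prove the bound by induction on $t$, exploiting the recursive structure of the marginal distributions. The marginal at time $t+1$ is obtained from the marginal at time $t$ by one application of the respective transition kernel, so I would first record the identities $\mathcal{T}^{t+1}(s') = \sum_{s} \mathcal{T}^{t}(s)\,\mathcal{T}(s' \mid s)$ and $\widehat{\mathcal{T}}^{t+1}(s') = \sum_{s} \widehat{\mathcal{T}}^{t}(s)\,\widehat{\mathcal{T}}(s' \mid s)$, together with the elementary fact that $D_{\text{TV}}(P,Q) = \tfrac12 \sum_{x} |P(x) - Q(x)|$ for distributions over the finite set $S$.

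For the \textbf{base case} $t = 0$, both systems start from the common initial distribution $\iota_{init}$, so $\mathcal{T}^{0} = \widehat{\mathcal{T}}^{0}$ and $D_{\text{TV}}(\mathcal{T}^{0}, \widehat{\mathcal{T}}^{0}) = 0 = \alpha \cdot 0$, establishing the claim trivially.

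For the \textbf{inductive step}, assume $D_{\text{TV}}(\mathcal{T}^{t}, \widehat{\mathcal{T}}^{t}) \leq \alpha t$. The key move — and the step I expect to be the main obstacle — is a hybrid (add-and-subtract) decomposition of the difference of one-step-evolved marginals. Writing
\begin{equation*}
\mathcal{T}^{t+1}(s') - \widehat{\mathcal{T}}^{t+1}(s') = \sum_{s} \mathcal{T}^{t}(s)\bigl(\mathcal{T}(s' \mid s) - \widehat{\mathcal{T}}(s' \mid s)\bigr) + \sum_{s} \bigl(\mathcal{T}^{t}(s) - \widehat{\mathcal{T}}^{t}(s)\bigr)\widehat{\mathcal{T}}(s' \mid s),
\end{equation*}
I would then apply the triangle inequality and sum $\tfrac12\sum_{s'} |\cdot|$ over the two groups separately. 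For the first group, swapping the order of summation and summing over $s'$ first yields $\sum_{s} \mathcal{T}^{t}(s)\, D_{\text{TV}}(\mathcal{T}(\cdot \mid s), \widehat{\mathcal{T}}(\cdot \mid s)) \leq \alpha$, using the per-state hypothesis $D_{\text{TV}}(\mathcal{T}(\cdot \mid s), \widehat{\mathcal{T}}(\cdot \mid s)) \leq \alpha$ and the fact that $\mathcal{T}^{t}$ is a probability distribution. For the second group, summing $\widehat{\mathcal{T}}(s' \mid s)$ over $s'$ gives $1$, leaving $\tfrac12 \sum_{s} |\mathcal{T}^{t}(s) - \widehat{\mathcal{T}}^{t}(s)| = D_{\text{TV}}(\mathcal{T}^{t}, \widehat{\mathcal{T}}^{t}) \leq \alpha t$ by the inductive hypothesis.

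Combining the two bounds gives $D_{\text{TV}}(\mathcal{T}^{t+1}, \widehat{\mathcal{T}}^{t+1}) \leq \alpha + \alpha t = \alpha(t+1)$, closing the induction. The whole argument is routine once the hybrid decomposition is in place; the only subtlety is choosing the correct intermediate term (mixing the ``true'' marginal with the ``approximate'' kernel) so that one group isolates the per-step kernel error while the other isolates the accumulated marginal error, letting the bound telescope additively rather than compounding multiplicatively.
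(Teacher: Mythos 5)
Your proof is correct and follows essentially the same route as the paper's: the same add-and-subtract decomposition with the intermediate term $\mathcal{T}^{t}(s)\,\widehat{\mathcal{T}}(s' \mid s)$, with the first group bounded by the per-state kernel assumption and the second by the accumulated marginal error. The only cosmetic difference is that you phrase the argument as an explicit induction on $t$, whereas the paper derives the one-step recursion $D_{\text{TV}}(\mathcal{T}^{t}, \widehat{\mathcal{T}}^{t}) \leq \alpha + D_{\text{TV}}(\mathcal{T}^{t-1}, \widehat{\mathcal{T}}^{t-1})$ and unrolls it to $t=0$.
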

\begin{proof}
First let us fix some $s \in S$. Then,
\begin{align*}
    \big\vert\mathcal{T}^t(s) - \widehat{\mathcal{T}}^t(s)\big\vert & = \Big\vert \sum_{\bar s \in S} \mathcal{T}(s \mid \bar s) \mathcal{T}^{t-1}(\bar s) - \sum_{\bar s \in S} \widehat{\mathcal{T}}(s \mid \bar s) \widehat{\mathcal{T}}^{t-1}(\bar s)\Big\vert \\
    & \leq \sum_{\bar s \in S} \big\vert \mathcal{T}(s \mid \bar s) \mathcal{T}^{t-1}(\bar s) - \widehat{\mathcal{T}}(s \mid \bar s) \widehat{\mathcal{T}}^{t-1}(\bar s) \big\vert \\
    & \leq \sum_{\bar s \in S} \big\vert \mathcal{T}^{t-1}(\bar s)\left( \mathcal{T}(s \mid \bar s) - \widehat{\mathcal{T}}(s \mid \bar s) \right) \big\vert \\
    & \qquad + \big\vert \widehat{\mathcal{T}}(s \mid \bar s)\left( \mathcal{T}^{t-1}(\bar s) - \widehat{\mathcal{T}}^{t-1}(\bar s)\right)\big\vert
\end{align*}    
Using the above inequality we get the following,
\begin{align*}
    2 D_{\text{TV}}\left(\mathcal{T}^t, \widehat{\mathcal{T}}^t \right) & = \sum_{s \in S} \big\vert\mathcal{T}^t(s) - \widehat{\mathcal{T}}^t(s)\big\vert \\
    & \leq \sum_{\bar s \in S} \mathcal{T}^{t-1}(\bar s) \sum_{s \in S} \big\vert \mathcal{T}(s \mid \bar s) - \widehat{\mathcal{T}}(s \: |\: \bar s) \big\vert \\
    & + \sum_{\bar s \in S} \big\vert \mathcal{T}^{t-1}(\bar s) - \widehat{\mathcal{T}}^{t-1}(\bar s) \big\vert \\
    & \leq 2 \alpha + 2 D_{\text{TV}}\left(\mathcal{T}^{t-1}, \widehat{\mathcal{T}}^{t-1} \right) \\
    & \leq 2 \alpha t
\end{align*}
The final inequality holds by applying the the recursion obtained on $t$ until $t = 0$ where $\mathcal{T}$ and $\widehat{\mathcal{T}}$ start from the same initial state distribution $\iota_{init}$.
\end{proof}
\begin{restatedthm}{\ref{prop:tv} Restated}
Let $\epsilon > 0$, $\delta> 0$ be given. Suppose that for all $s \in S$, the total variation (TV) distance between $\mathcal{T}(s' \mid s)$ and $\widehat{\mathcal{T}}(s' \mid s)$ is bounded by some $\alpha \leq \epsilon/n$. That is,
\begin{equation*}
	D_{\text{TV}}\left(\mathcal{T}(s' \mid s), \widehat{\mathcal{T}}(s' \mid s) \right) \leq \alpha \; \forall s \in S
\end{equation*}
Now fix an $s \in S$, with probability $1 - \delta$ we can obtain an $\epsilon$-approximate estimate of the measure $\mu_{s\models\phi}$, by sampling $m$ traces $\tau \sim \widehat{\mathcal{T}}$, provided that,
\begin{equation*}
    m \geq \frac{2}{\epsilon^2} \log\left(\frac{2}{\delta}\right) 
\end{equation*}
\end{restatedthm}
\begin{proof}
Recall that,
\begin{equation*}
    \mu_{s \models \phi} = \mu_s(\{\tau \mid \tau[0] = s, \text{ for all }0 \leq i \leq n, \tau[i] \models \Psi\})
\end{equation*}
where $\tau \sim \mathcal{T}$. Equivalently we can write,
\begin{equation*}
    \mu_{s \models \phi} = \mathbb{P}_{\tau \sim \mathcal{T}}[\tau \models \square^{\leq n} \Psi ]
\end{equation*}
Similarly, let $\hat \mu_{s \models \phi}$ be defined as the \emph{true} probability under $\widehat{\mathcal{T}}$,
\begin{equation*}
    \hat \mu_{s \models \phi} = \mathbb{P}_{\tau \sim \widehat{\mathcal{T}}}[\tau \models \square^{\leq n} \Psi]
\end{equation*}
Let the following denote the average state distribution for $\mathcal{T}$ and $\widehat{\mathcal{T}}$ respectively,
\begin{eqnarray*}
    \rho_{\mathcal{T}}(s) = \frac{1}{n}\sum^n_{i=1}\mathbb{P}_{\tau \sim \mathcal{T}}(\tau[i] = s) \\
    \rho_{\widehat{\mathcal{T}}}(s) = \frac{1}{n}\sum^n_{i=1}\mathbb{P}_{\tau \sim \widehat{\mathcal{T}}}(\tau[i] = s) \\
\end{eqnarray*}
By following the simulation lemma \cite{kearns2002near}, we get the following,
\begin{align*}
    \big\vert \mu_{s \models \phi} - \hat \mu_{s \models \phi} \big\vert & = \big\vert \mathbb{P}_{\tau \sim \mathcal{T}}[\tau \models \square^{\leq n} \Psi ] - \mathbb{P}_{\tau \sim \widehat{\mathcal{T}}}[\tau \models \square^{\leq n} \Psi ] \big\vert \\
    & \leq  1 \cdot D_{TV}(\rho_{\mathcal{T}}, \rho_{\widehat{\mathcal{T}}})\\
    &  = \frac{1}{2}\sum_{s \in S} \big\vert \rho_{\mathcal{T}}(s) - \rho_{\widehat{\mathcal{T}}}(s) \big\vert \\
    & = \frac{1}{2n}\sum_{s \in S} \Big\vert \sum^{n}_{i=1}\mathbb{P}_{\tau \sim \mathcal{T}}(\tau[i] = s) -  \mathbb{P}_{\tau \sim \widehat{\mathcal{T}}}(\tau[i] = s) \Big\vert \\
    & \leq \frac{1}{2n}\sum_{s \in S} \sum^{n}_{i=1} \Big\vert \mathbb{P}_{\tau \sim \mathcal{T}}(\tau[i] = s) -  \mathbb{P}_{\tau \sim \widehat{\mathcal{T}}}(\tau[i] = s) \Big\vert \\
    & \leq \frac{1}{2n} \sum^{n}_{i=1} \alpha n  \qquad \text{(Using Lemma \ref{lemma:marginal})}\\
    & = \frac{\alpha n}{2}
\end{align*}
Now we have that $\big\vert \mu_{s \models \phi} - \hat \mu_{s \models \phi} \big\vert \leq (\alpha n)/2 \leq \epsilon/2$. It remains to obtain an $\epsilon/2$-approximation of $\hat \mu_{\models \phi}$. Using the exact same reasoning as in the proof of Theorem \ref{prop:boundonm}, we estimate $\hat \mu_{\models \phi}$ by sampling $m$ traces $\langle\tau_j\rangle^m_{j=1}$ from $\widehat{\mathcal{T}}$. Then provided,
\begin{equation*}
    m \geq \frac{2}{\epsilon^2} \log\left(\frac{2}{\delta}\right) 
\end{equation*}
with probability $1 - \delta$ we obtain an $\epsilon/2$-approximation of $\hat \mu_{\models \phi}$ and by extension an $\epsilon$-approximation of $\mu_{\models \phi}$.
\end{proof}

\subsection{Proof of Theorem \ref{prop:tabular}}
\begin{restatedthm}{\ref{prop:tabular} Restated}
Let $\alpha > 0$, $\delta > 0$, $s \in S$ be given. With probability $1 - \delta$ the total variation (TV) distance between $\mathcal{T}(s' \mid s)$ and $\widehat{\mathcal{T}}(s' \mid s)$ is upper bounded by $\alpha$, provided that all actions $a \in A$ with non-negligable probability $\eta \geq \alpha/(|A||S|)$ (under $\pi$) have been picked from $s$ at least $m$ times, where
\begin{equation*}
	m \geq \frac{|S|^2}{\alpha^2}\log\left( \frac{2 |S||A|}{\delta}\right)
\end{equation*}
\end{restatedthm}
\begin{proof}
Recall that $p(s' \mid s, a)$ denotes the probability of transitioning to $s'$ from $s$ when action $a$ is played. Lets first fix $s'$ and just consider approximating one of these probabilities. Let,
\begin{itemize}
    \item $p = p(s' \mid s, a)$ and
    \item $m$ be the number of times $a$ is played from $s$.
    \item Let $X_1, ..., X_m$ be the indicator r.v.s such that,
    \begin{equation*}
        X_{i} = \begin{cases}
        1 & \text{if $s'$ given $(s, a)$}\\
        0 & \text{otherwise}
    \end{cases}
    \end{equation*}
    \item $\hat p = \frac{1}{m} \sum^m_{i=1} X_i$, where $\mathbb{E}[\hat p] = p$.
\end{itemize}
By Hoeffding's Inequality we have,
\begin{equation*}
    \mathbb{P}\left[ |\hat p - p| \geq \frac{\alpha}{|S|}\right] \leq 2 \cdot \exp \left( -2 m \frac{\alpha^2}{|S|^2}\right)
\end{equation*}
Bounding the RHS from above by $\delta / (|S||A|)$ gives us,
\begin{equation*}
    m \geq \frac{|S|^2}{\alpha^2} \log \left(\frac{2|S||A|}{\delta} \right)
\end{equation*}
And so with probability $1 - \delta/(|S||A|)$ we have an $\alpha/|S|$-approximation for $p$ provided $m$ satisfies the above bound. Taking a union bound over all $s' \in S$ and all $a \in A$ we have with probability at least $1 - \delta$ an $\alpha/|S|$-approximation $p(s' \mid s, a)$ for all $s' \in S$ and all actions $a \in A$ with non-negligible probability $\eta \geq \frac{\alpha}{|A|}$ (under $\pi$). It remains to show that the TV distance between $\mathcal{T}(s' \mid s)$ and $\widehat{\mathcal{T}}(s' \mid s)$ is upper bounded by $\alpha$,
\begin{align*}
    &2 D_{TV} \left(\mathcal{T}(s' \mid s),  \widehat{\mathcal{T}}(s' \mid s)\right) \\
     &= \sum_{s' \in S} \big\vert\mathcal{T}(s' \mid s) - \widehat{\mathcal{T}}(s' \mid s) \big\vert \\
     &= \sum_{s' \in S} \sum_{a \in A} \big\vert p(s' \mid s, a)\pi(a \mid s) - \hat p(s' \mid s, a)\pi(a \mid s) \big\vert \\
     &= \sum_{s' \in S} \Bigg( \sum_{a \in A : \pi(a \mid s) \geq \eta} \big\vert p(s' \mid s, a)\pi(a \mid s) - \hat p(s' \mid s, a)\pi(a \mid s) \big\vert \\
      & \qquad + \sum_{a \in A : \pi(a \mid s) < \eta} \big\vert p(s' \mid s, a)\pi(a \mid s) - \hat p(s' \mid s, a)\pi(a \mid s) \big\vert \Bigg) \\ 
      & = \sum_{s' \in S} \Bigg( \sum_{a \in A : \pi(a \mid s) \geq \eta} \pi(a \mid s) \big\vert p(s' \mid s, a) - \hat p(s' \mid s, a) \big\vert \\
      & \qquad + \sum_{a \in A : \pi(a \mid s) < \eta} \pi(a \mid s) \big\vert p(s' \mid s, a) - \hat p(s' \mid s, a) \big\vert \Bigg)\\
      & \leq \sum_{s' \in S} \Bigg(\sum_{a \in A : \pi(a \mid s) \geq \eta} \pi(a \mid s) \frac{\alpha}{|S|} + \sum_{a \in A : \pi(a \mid s) < \eta} \pi(a \mid s) \Bigg) \\
      &\leq \sum_{s' \in S} \left( \frac{\alpha}{|S|} + |A| \cdot \eta \right) \\
      & \leq \sum_{s' \in S} \left(\frac{\alpha}{|S|} + \frac{\alpha}{|S|}\right) \\
      &\leq 2 \alpha
\end{align*}
\end{proof}

\subsection{Proof of Theorem \ref{prop:pomdp}}

\begin{restatedthm}{\ref{prop:pomdp} Restated} Let $b_t$ be a latent representation (belief state) such that $p(s_t \mid o_{t\leq t}, a_{\leq t}) = p(s_t \mid b_t)$. Let the fixed policy $\pi(\cdot \mid b_t)$ be a general probability distribution conditional on belief states $b_t$. Let $f$ be a generic $f$-divergence measure (TV or similar). Then the following holds:
\begin{equation*}
    D_{f}(\mathcal{T}(s' \mid b ), \widehat{\mathcal{T}}(s' \mid b )) \leq D_{f}(\mathcal{T}(b' \mid b ), \widehat{\mathcal{T}}(b' \mid b ))
\end{equation*}
where $\mathcal{T}$ and $\widehat{\mathcal{T}}$ are the `true' and approximate transition system respectively, defined now over both states $s$ and belief states $b$.
\end{restatedthm} 

\begin{proof}
First for clarity, we define the transitions systems $\mathcal{T}$ and $\widehat{\mathcal{T}}$ over states $s$ and belief states $b$. First we have as before
\begin{eqnarray*}
    \mathcal{T}(s' \mid b) = \mathbb{P}_{\pi, p}[s_t = s' \mid b_{t-1} = b] \\
    \widehat{\mathcal{T}}(s' \mid b) = \mathbb{P}_{\pi, \hat p}[s_t = s' \mid b_{t-1} = b]
\end{eqnarray*}
Additionally, let,
\begin{eqnarray*}
    \mathcal{T}(b' \mid b) = \mathbb{P}_{\pi, p}[b_t = b' \mid b_{t-1} = b] \\
    \widehat{\mathcal{T}}(b' \mid b) = \mathbb{P}_{\pi, \hat p}[b_t = b' \mid b_{t-1} = b]
\end{eqnarray*}
From these definitions, note that we can immediately define conditional and joint probabilities (i.e. $\mathcal{T}(s', b' \mid b)$, $\mathcal{T}(s' \mid b)$, $\mathcal{T}(b' \mid s', b)$ and similarly for $\widehat{\mathcal{T}}$) using the standard laws of probability. 

Now we are ready to apply the data-processing inequality \cite{ali1966general} for f-divergences as follows,
\begin{align*}
    &D_{f}(\mathcal{T}(b' \mid b), \widehat{\mathcal{T}}(b' \mid b )) \\  
    &= \mathbb{E}_{b' \sim \widehat{\mathcal{T}}}\left[ f \left( \frac{\mathcal{T}(b' \mid b)}{\widehat{\mathcal{T}}(b' \mid b)}\right) \right] \\
    & = \mathbb{E}_{s', b' \sim \widehat{\mathcal{T}}}\left[ f \left( \frac{\mathcal{T}(s', b' \mid b)}{\widehat{\mathcal{T}}(s', b' \mid b)}\right) \right] \\
    & = \mathbb{E}_{s' \sim \widehat{\mathcal{T}}}\left[  \mathbb{E}_{b' \sim \widehat{\mathcal{T}}} f \left( \frac{\mathcal{T}(s', b' \mid b)}{\widehat{\mathcal{T}}(s', b' \mid b)}\right) \right] \\
    & \geq \mathbb{E}_{s' \sim \widehat{\mathcal{T}}}\left[ f \left( \mathbb{E}_{b' \sim \widehat{\mathcal{T}}}  \frac{\mathcal{T}(s', b' \mid b)}{\widehat{\mathcal{T}}(s', b' \mid b)}\right) \right] \qquad \text{(Jensen's)} \\
    & = \mathbb{E}_{s' \sim \widehat{\mathcal{T}}}\left[ f \left( \mathbb{E}_{b' \sim \widehat{\mathcal{T}}}  \frac{\mathcal{T}(s', b' \mid b) \widehat{\mathcal{T}}(b' \mid s', b)}{\widehat{\mathcal{T}}(s', b' \mid b)\mathcal{T}(b' \mid s', b)}\right) \right] \\
    & = \mathbb{E}_{s' \sim \widehat{\mathcal{T}}}\left[ f \left( \mathbb{E}_{b' \sim \widehat{\mathcal{T}}}  \frac{\mathcal{T}(s' \mid b)}{\widehat{\mathcal{T}}(s' \mid b)}\right) \right] \\
    & = \mathbb{E}_{s' \sim \widehat{\mathcal{T}}}\left[ f \left( \frac{\mathcal{T}(s' \mid b)}{\widehat{\mathcal{T}}(s' \mid b)}\right) \right] \\
    & = D_{f}(\mathcal{T}(s' \mid b ), \widehat{\mathcal{T}}(s' \mid b ))
\end{align*}
\end{proof}

\section{Code and Hyperparameters}
\label{sec:hyperparams}
Here we specify the most important hyperparameters for each of the agents in our experiments. For more precise implementation details we refer the reader to \url{https://github.com/sacktock/AMBS}.

\paragraph{World Model.} As discussed we leverage DreamerV3 \cite{hafner2023mastering}. For all architectural details and hyperparameter choices please refer to \cite{hafner2023mastering}.

\paragraph{Predictor Heads} The cost function and safety discount predictor heads are implemented as neural network architectures similar to those used for the reward predictor and termination predictor of DreamerV3 \cite{hafner2023mastering}. Specifically the cost function predictor head is implemented in exactly the same was as the reward predictor head except it is used to predict cost signals instead of reward signals. Similarly, the safety discount predictor is a binary classifier implemented in the exact same was as the termination predictor, except one predicts safety-violations and the other predicts episode termination.

\paragraph{Safe Policy} As discussed, the safe policy is trained with the same TD-$\lambda$ style actor-critic algorithm that is used for the standard reward maximising (task) policy of DreamerV3 \cite{hafner2023mastering}. The actor and critic are implemented as neural networks with the same architecture that is used for the task policy and the hyperparameters are consistent between the safe policy and the task policy. Other details are outlined in Table \ref{tab:dreamerv3}.

\paragraph{Safety Critics} The safety critics are trained with a TD3-style algorithm \cite{fujimoto2018addressing}. The two critics themselves (and their target networks) are implemented as neural networks with the same architectures used for the critics that help train the task policy and safe policy. The hyperparameters also are mostly the same and any changes are outlined in Table \ref{tab:ambs}.
\subsection{DreamerV3 Hyperparameters}
\begin{table}
    \begin{center}
    {\caption{DreamerV3 hyperparameters \cite{hafner2023mastering}. Other methods built on DreamerV3 such as AMBS and LAG use this set of hyperparameters as well, unless otherwise specified.}
    \label{tab:dreamerv3}}   
    \begin{tabular}{c|c|c}
    \hline
       Name  & Symbol & Value \\
       \hline
       \multicolumn{3}{c}{General}\\
       \hline
         Replay capacity& - & $10^6$\\
         Batch size & $B$ & 16\\
         Batch Length & - & 64\\
         Num. Envs & - & 8 \\
         Train ratio & - & 64 \\
         MLP Layers & - & 5 \\
         MLP Units & - & 512 \\
         Activation & - & LayerNorm + SiLU \\
         \hline
         \multicolumn{3}{c}{World Model}\\
         \hline
         Num. latents & - & 32\\
         Classes per latent & - & 32\\
         Num. Layers & - & 5 \\
         Num. Units & - & 1024 \\
         Recon. loss scale & $\beta_{\text{pred}}$ & 1.0\\
         Dynamics loss scale & $\beta_{\text{dyn}}$ & 0.5\\
         Represen. loss scale & $\beta_{\text{rep}}$ & 0.1\\
         Learning rate & - & $10^{-4}$ \\
         Adam epsilon & $\epsilon_{\text{adam}}$& $10^{-8}$\\
         Gradient clipping & - & 1000 \\
         \hline
         \multicolumn{3}{c}{Actor Critic}\\
         \hline
         Imagination horizon & H & 15\\
         Discount factor & $\gamma$ & 0.997 \\
         TD lambda & $\lambda$ & 0.95\\
         Critic EMA decay & - & 0.98 \\
         Critic EMA regulariser & -&1\\
         Return norm. scale & $S_{\text{reward}}$ & $\text{Per}(R, 95) - \text{Per}(R, 5) $ \\
         Return norm. limit & $L_{\text{reward}}$ & 1\\
         Return norm. decay & - & 0.99\\
         Actor entropy scale & $\eta_{\text{actor}}$ & $3 \cdot 10^{-4}$\\
         Learning rate & - & $3 \cdot 10^{-5}$\\
         Adam epsilon & $\epsilon_{\text{adam}}$& $10^{-5}$\\
         Gradient clipping & - & 100 \\
         \hline
    \end{tabular}
    \end{center}
\end{table}
\clearpage
\subsection{AMBS Hyperparameters}
\begin{table}
    \begin{center}
    {\caption{AMBS hyperparameters. We note that $m>512$ is sufficient for $\Delta = 0.1$, $\epsilon=0.09$, $\delta =0.01$ using a bound similar to Eq.~\ref{eq:boundonmbig} that gives a bound on overestimating $\mu_{s \models \phi}$.}
    \label{tab:ambs}}
    \begin{tabular}{c|c|c}
    \hline
       Name  & Symbol & Value \\
       \hline
       \multicolumn{3}{c}{Shielding}\\
       \hline
         Safety level& $\Delta$ & 0.1\\
         Approx. error & $\epsilon$ & 0.09\\
         Num. samples & $m$ & 512\\
         Failure probability & $\delta$ & 0.01 \\
         Look-ahead horizon & $T$ & 30\\
         Cost Value & $C$ & 10\\
         \hline
         \multicolumn{3}{c}{Safe Policy}\\
         \hline
         \multicolumn{3}{c}{See `Actor Critic' table \ref{tab:dreamerv3}} \\
         \multicolumn{3}{c}{...} \\
         \hline
         \multicolumn{3}{c}{Safety Critic}\\
         \hline
         Type & - & TD3-style \cite{fujimoto2018addressing}\\
         Slow update freq. & - & 1\\
         Slow update fraction & - & 0.02\\
         EMA decay & - & 0.98 \\
         EMA regulariser & -&1\\
         Cost norm. scale & $S_{\text{cost}}$ & $\text{Per}(R, 95) - \text{Per}(R, 5) $ \\
         Cost norm. limit & $L_{\text{cost}}$ & 1\\
         Cost norm. decay & - & 0.99\\
         Learning rate & - & $3 \cdot 10^{-5}$\\
         Adam epsilon & $\epsilon_{\text{adam}}$& $10^{-5}$\\
         Gradient clipping & - & 100 \\
         \hline
    \end{tabular}
    \end{center}
\end{table}
\subsection{LAG Hyperparameters}
\begin{table}
    \begin{center}
    {\caption{LAG hyperparameters \cite{ray2019benchmarking}. We use the default hyperparameters provided in \cite{as2022constrained} for the safety gym benchmark \cite{ray2019benchmarking}. }
    \label{tab:lcpo}}
    \begin{tabular}{c|c|c}
    \hline
       Name  & Symbol & Value \\
       \hline
        Penalty Multiplier & $\mu_k$ & $5 \cdot 10^{-9}$ \\
        Lagrange Multiplier & $\lambda^k$ & $10^{-6}$ \\
        Penalty Power & $\sigma$ & $10^{-5}$ \\
        Safety Horizon & $T$ & 30 \\
        Cost Value & $C$ & 10 \\
        Cost Threshold & $d = C \cdot \gamma^{T}$ & $\approx 9$ \\
        \hline
    \end{tabular}
    \end{center}
\end{table}
\newpage
\subsection{IQN Hyperparameters}
\begin{table}
    \begin{center}
    {\caption{IQN hyperparameters \cite{dabney2018implicit} adapted for a fairer comparison as in \cite{castro18dopamine}.}
    \label{tab:iqn}}
    \begin{tabular}{c|c|c}
    \hline
       Name  & Symbol & Value \\
       \hline
        Replay capacity & - & $10^{6}$\\
        Batch size & $B$ & 32 \\
        IQN kappa & $\kappa$ & 1.0\\
        Num. $\tau$ samples & - & 64\\
        Num. $\tau'$ samples & - & 64\\
        Num. quantile samples & - & 32\\
        Discount factor & $\gamma$ & $0.99$ \\
        Update horizon & $n_{\text{hor}}$ & 3 \\
        Update freq. & - & $4$ \\
        Target update freq. & - & 8000 \\
        Epsilon greedy min & $\epsilon_{\text{min}}$ & 0.01 \\
        Epsilon decay period & - & 250000 \\
        Optimiser & - & Adam\\
        Learning rate & - & $5 \cdot 10^{-5}$\\
        Adam epsilon & $\epsilon_{\text{adam}}$ & $3.125 \cdot 10^{-4}$\\
        \hline
    \end{tabular}
    \end{center}
\end{table}
\subsection{Rainbow Hyperparameters}
\begin{table}
    \begin{center}
    \caption{Rainbow hyperparameters as recommended in \cite{hessel2018rainbow}.}
    \label{tab:rainbow}   
    \begin{tabular}{c|c|c}
    \hline
       Name  & Symbol & Value \\
       \hline
        Replay capacity & - & $10^{6}$\\
        Batch size & $B$ & 32 \\
        Discount factor & $\gamma$ & $0.99$ \\
        Update horizon & $n_{\text{hor}}$ & 3 \\
        Update freq. & - & $4$ \\
        Target update freq. & - & 8000 \\
        Epsilon greedy min & $\epsilon_{\text{min}}$ & 0.01 \\
        Epsilon decay period & - & 250000 \\
        Optimiser & - & Adam\\
        Learning rate & - & $6.25 \cdot 10^{-5}$\\
        Adam epsilon & $\epsilon_{\text{adam}}$ & $1.5 \cdot 10^{-4}$\\
        \hline
    \end{tabular}
    \end{center}
\end{table}

\newpage
\clearpage
\section{Learning Curves}
\label{sec:curves}
    \begin{figure*}[b]
	\centerline{\includegraphics[width=1.0\textwidth]{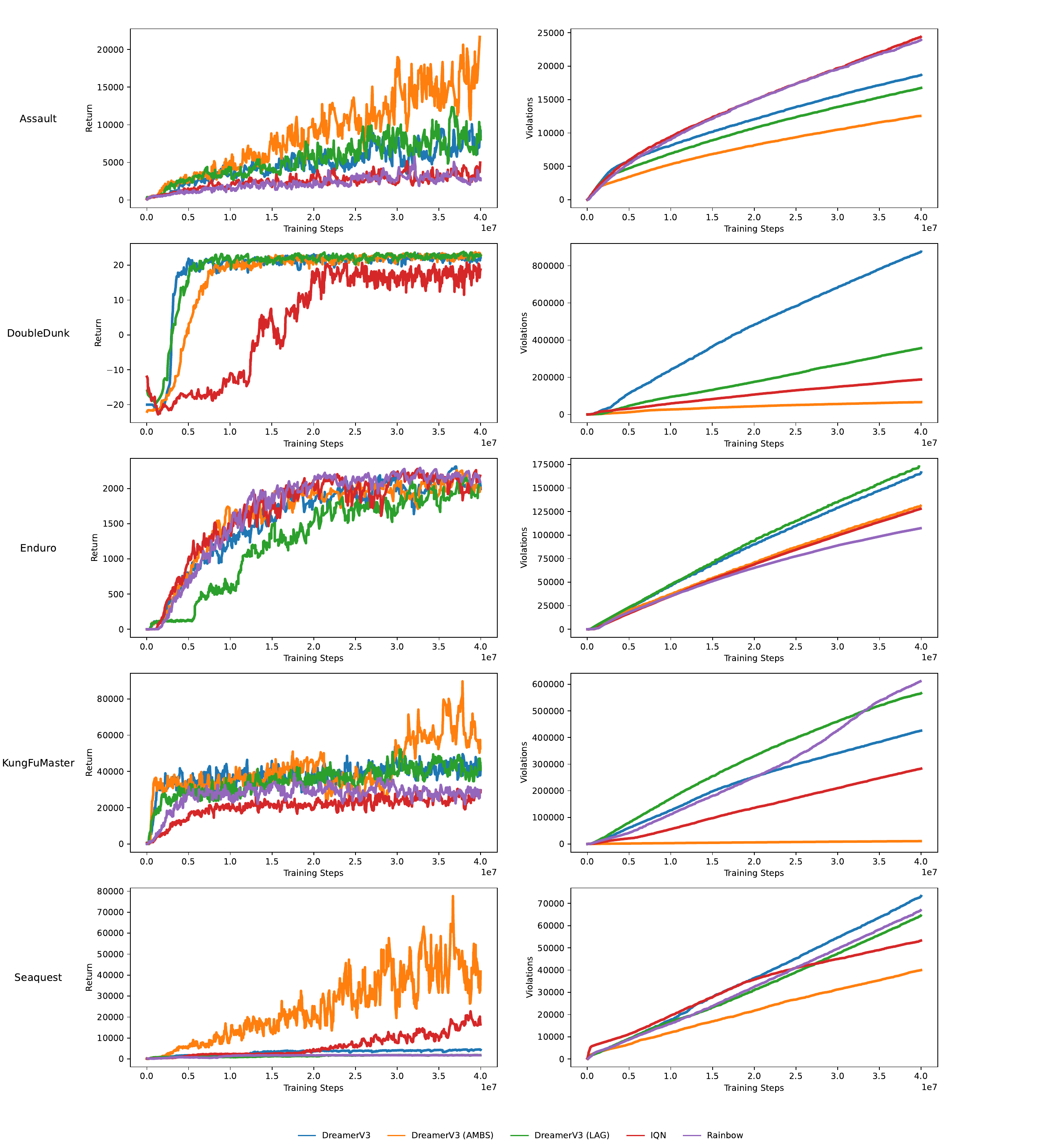}}
	\caption{Learning curves for each of the five agents on a small set of Atari games. Each line represents one run over 10M environment interactions (40M frames). The left plots represent the episode return and the right plots represent the cumulative safety-violations during training.}
	\label{fig:curves}
\end{figure*}
\vfill
\end{document}